\newcolumntype{M}[1]{>{\centering\arraybackslash}m{#1}}
\newcommand{\ist}{{i^\star}}
\newcommand{\USS}{\cP_{\small{\text{USS}}}}
\newcommand{\bc}{\boldsymbol{c}}
\newcommand{\bd}{\boldsymbol{Q}}
\newcommand{\SD}{\mathrm{SD}}
\newcommand{\WD}{\mathrm{WD}}
\newcommand{\PSD}{\cP_{\SD}}
\newcommand{\PWD}{\cP_{\WD}}
\newcommand{\Yti}{Y_t^i}
\newcommand{\Ytj}{Y_t^j}
\newcommand{\Yt}{Y_t}
\newcommand{\Yi}{Y^i}
\newcommand{\Yj}{Y^j}
\newcommand{\Yis}{Y^{i^\star}}
\newcommand{\Ht}{\mathcal{H}_t}
\newcommand{\pij}{p_{ij}}
\newcommand{\pijt}{p_{ij}^{(t)}}
\newcommand{\pjis}{p_{ji^\star}}
\newcommand{\pijs}{p_{i^\star j}}
\newcommand{\pijts}{p_{i^\star j}}
\newcommand{\opij}{\tilde{p}_{ij}^{(t)}}
\newcommand{\opji}{\tilde{p}_{ji}^{(t)}}
\newcommand{\opijtst}{\tilde{p}_{ij}^{(t)}}
\newcommand{\opijsts}{\tilde{p}_{i^\star j}^{(t)}}
\newcommand{\hpijs}{\hat{p}_{i^\star j}}
\newcommand{\hpijst}{\hat{p}_{i^\star j}^{(t)}}
\newcommand\numberthis{\addtocounter{equation}{1}\tag{\theequation}}
\newcommand{\Regret}{\mathfrak{R}}
\newcommand{\EE}[1]{\bE\left[#1\right]}
\newcommand{\Prob}[1]{\bP\left\{#1\right\}}
\newcommand{\one}[1]{\mathds{1}_{\left\{#1\right\}}}
\renewcommand{\phi}{\varphi}
\renewcommand{\epsilon}{\varepsilon}
\newcommand{\al}[1]{ \begin{align} #1  \end{align}}
\newcommand{\eq}[1]{ \begin{equation} #1  \end{equation}}
\newcommand{\eqs}[1]{ \begin{equation*} #1  \end{equation*}}
\newcommand{\el}{\end{flushleft}}
\newcommand{\bl}{\begin{flushleft}}
\newcommand{\argmin}{\arg\!\min}
\newcommand{\bE}{\mathbb{E}}
\newcommand{\bP}{\mathbb{P}}
\newcommand{\cB}{\mathcal{B}}
\newcommand{\cP}{\mathcal{P}}
\theoremstyle{plain}
\newtheorem{lem}{Lemma}
\newtheorem{prop}{Proposition}
\newtheorem{cor}{Corollary}
\newtheorem{rem}{Remark}
\newtheorem{defi}{Definition}
\newtheorem{fact}{Fact}
\newif\ifsup
\title[Thompson Sampling for Unsupervised Sequential Selection]{Thompson Sampling for Unsupervised Sequential Selection}
\author{\Name{Arun Verma} 
		\Email{v.arun@iitb.ac.in}\\
	\Name{Manjesh K. Hanawal} 
		\Email{mhanawal@iitb.ac.in}\\
	\Name{Nandyala Hemachandra} 
		\Email{nh@iitb.ac.in}\\
	\addr Indian Institure of Technology Bombay, India
}
\begin{document}

	\maketitle

	\begin{abstract}
		 Thompson Sampling has generated significant interest due to its better empirical performance than upper confidence bound based algorithms. In this paper, we study Thompson Sampling based algorithm for {\em \underline{U}nsupervised \underline{S}equential \underline{S}election} (USS) problem. The USS problem is a variant of the stochastic multi-armed bandits problem, where the loss of an arm can not be inferred from the observed feedback. In the USS setup, arms are associated with fixed costs and are ordered, forming a cascade. In each round, the learner selects an arm and observes the feedback from arms up to the selected arm. The learner's goal is to find the arm that minimizes the expected total loss. The total loss is the sum of the cost incurred for selecting the arm and the stochastic loss associated with the selected arm. The problem is challenging because, without knowing the mean loss, one cannot compute the total loss for the selected arm. Clearly, learning is feasible only if the optimal arm can be inferred from the problem structure. As shown in the prior work, learning is possible when the problem instance satisfies the so-called `Weak Dominance' $(\WD)$ property. Under $\WD$, we show that our Thompson Sampling based algorithm for the USS problem achieves near-optimal regret and has better numerical performance than existing algorithms. 
	\end{abstract}

	\begin{keywords}
		Sequential Decision Making, Partial Monitoring System, Thompson Sampling
	\end{keywords}

	\section{Introduction}
	\label{sec:introduction}

Many variants of sequential decision-making problems are considered in the literature depending on the type of feedback and the amount of information they reveal about the rewards. The multi-armed bandits and the expert setting \citep{ML02_auer2002finite, NOW12_bubeck2012regret} are well-studied problems where feedback provides direct information about the rewards. In the multi-armed bandit setting, feedback observed from an action reveals only the reward associated with that action. However, in the expert setting, the feedback observed from an action reveals reward associated with the action played as well as all other actions. The settings that span in between these two extreme cases are also studied, namely, bandits with side-information \citep{NIPS11_mannor2011bandits,NIPS13_alon2013bandits,COLT15_alon2015online,NIPS15_wu2015online}. In many problems, the actions can be indirectly tied to the rewards. Such setting is referred as partial monitoring setting \citep{MOR06_cesa2006regret,ALT12_bartok2012partial,MOR14_bartok2014partial}. It includes all the previously described setups as special cases.

Most of the previous work on partial monitoring is restricted to cases where feedback from the actions allows the learner to identify the rewards of the actions. However, in many areas like crowd-sourcing \citep{NIPS17_bonald2017minimax, ICLM18_kleindessner2018crowdsourcing}, medical diagnosis \citep{AISTATS17_hanawal2017unsupervised}, resource allocation \citep{NeurIPS19_verma2019censored}, and many others, feedback from actions may not even be sufficient to identify their rewards.

Such reward structures can be found in many prediction problems, where one may have to predict labels for instances whose associated ground-truth cannot be obtained. Such problems arise naturally in medical diagnosis, crowd-sourcing, security system \citep{AISTATS17_hanawal2017unsupervised}, and unsupervised features selection \citep{COMSNETS20_verma2020unsupervised}. In the medical diagnosis problem, the true state of the patients may not be known; hence, the test's effectiveness cannot be known. Whereas in the crowd-sourcing systems, the expertise level of self-listed-agents (workers) is unknown; therefore, the quality of their work cannot be known. In these prediction problems, we can observe prediction from test/worker, but we cannot ascertain their reliability due to the absence of ground truth.

In many of the real-world situations like those found in medical diagnosis, airport security, and manufacturing, a set of tests or classifiers is used to monitor patients, people, and products. Tests have cost with the more informative ones resulting in higher monetary costs and higher latency. Thus, they are often organized as a cascade \citep{AISTATS12_chen2012classifier, AISTATS13_trapeznikov2013supervised}, so that a new input is first probed by an inexpensive test then more expensive one. We refer to such cascaded systems as {\it Unsupervised Sequential Selection} (USS) problem\footnote{Note that the unsupervised sequential selection problem is referred to as the unsupervised sensor selection problem in the prior work \citep{AISTATS17_hanawal2017unsupervised, AISTATS19_verma2019online}.}, where an arm represents a test/ worker. A learner's goal in the USS problem is to select the most cost-effective arm so that the overall system maintains high accuracy at low average costs.

In this paper, we draw upon several concepts introduced in prior work \citep{AISTATS17_hanawal2017unsupervised, AISTATS19_verma2019online}. Specifically, we use the notion of weak dominance \citep{AISTATS19_verma2019online} that helps to find optimal arm using observed disagreements between arms. We propose a Thompson Sampling \citep{COLT12_agrawal2012analysis, ALT12_kaufmann2012thompson, AISTATS13_agrawal2013further} based algorithm for the USS problem and show that it is a near-optimal algorithm.  We then validate its performance on several problem instances derived from synthetic and real datasets.  
Our contributions can be summarized as follows:
\begin{itemize}
	\item We develop a Thompson Sampling based algorithm named \ref{alg:TS_USS} for the USS problem. This algorithm uses a one-sided test to find the optimal arm, whereas the state-of-the-art algorithm proposed in \cite{AISTATS19_verma2019online} uses a two-sided test to identify the optimal arm. The new one-sided test leads to a simpler algorithm. 
	
	\item In \cref{sec:ts_uss}, we characterize the regret of \ref{alg:TS_USS} in terms of how well the problem instance satisfies the $\WD$ property and show that it has sub-linear regret under $\WD$ property. We also give problem independent regret bound and establish that the regret bounds are near-optimal using results from the partial monitoring system.
	
	\item We demonstrate empirical performance of \ref{alg:TS_USS} on synthetic and real datasets in \cref{sec:experiments}. Our experimental results show that regret of \ref{alg:TS_USS} is always lower than USS-UCB \citep{AISTATS19_verma2019online} and heuristic algorithm given in \cite{AISTATS17_hanawal2017unsupervised}.
\end{itemize}

	\section{Problem Setting}		
	\label{sec:problemSetting}

We consider a stochastic $K$-armed bandits problem. The set of arms is denoted by $[K]$ where $[K] \doteq \{1,2,\ldots, K\}$. In each round $t$, the environment generates a binary $K+1$-dimensional vector $\left(\Yt, \{\Yti\}_{i \in [K]}\right)$. The variable $\Yt$ denotes the best binary feedback for round $t$, which is hidden from the learner. The vector $\left(\{\Yti\}_{i \in [K]}\right) \in \{0, 1\}^{K}$ represents observed feedback at time $t$, where $\Yti$ denote the feedback\footnote{In the USS setup, an arm $i$ could represent a classifier. After using the first $i$ classifiers, the final label can be a function of labels predicted by the first $i$ classifiers, $i \in[K]$.} observed after playing arm $i$. We denote the cost for using arm $i \in [K]$ as $c_i\geq 0$ that is known to learner and the same for all rounds.

In the USS setup, the arms are assumed to be ordered and form a cascade. When the learner selects an arm $i \in [K]$, the feedback from all arms till arm $i$ in the cascade is observed. The expected loss of playing the arm $i$ is denoted as $\gamma_i \doteq \EE{\one{\Yi \neq Y}} = \Prob{\Yi \neq Y}$, where $\one{A}$ denotes indicator of event $A$. The {\it expected total cost} incurred by playing arm $i$ is defined as $\gamma_i +\lambda_iC_i$, where $C_i \doteq c_1 + \ldots + c_i$ and $\lambda_i$ is a trade-off parameter that normalizes the loss and the incurred cost of playing arm $i$.

Since the best binary feedback are hidden from the learner, the expected loss of an arm cannot be inferred from the observed feedback. We thus have a version of the stochastic partial monitoring problem, and we refer to it as \underline{u}nsupervised \underline{s}equential \underline{s}election (USS) problem. Let $\bd$ be the unknown joint distribution of $(Y, Y^1, Y^2 \ldots, Y^K)$. Henceforth we identify an USS instance as $P \doteq (\bd,\bc)$ where $\bc \doteq (c_1, c_2, \ldots, c_K)$ is the known cost vector of arms. We denote the collection of all USS instances as $\USS$. For instance $P \in \USS$, the optimal arm is given by
\eq{
	\label{equ:optimalArm}
	\ist \in \max\left\{\argmin _{i \in [K]} \left( \gamma_i+\lambda_iC_i \right)\right\}
}
where the `max' operator selects the arm with the largest index among the minimizers.
The choice of $\ist$ in \cref{equ:optimalArm} is risk-averse as we prefer the arm with lower error among the good arms. The interaction between the environment and a learner is given in Algorithm \ref{alg:USS}.
\begin{algorithm}[H]
	\caption{Learning with USS instance $(\bd, \bc)$}
	\label{alg:USS}
	For each round $t$: 
	\begin{enumerate}
		\item \textbf{Environment} chooses a vector $(\Yt, \{\Yti\}_{i \in [K]})\sim \bd$.
		\item \textbf{Learner} selects an arm $I_t \in [K]$ to stop in cascade.
		\item \textbf{Feedback and Loss:} The learner observes feedback $(\Yt^1, \Yt^2, \ldots, \Yt^{I_t})$ and incurs a total loss $\one{Y^{I_t} \neq \Yt} + \lambda_{I_t}C_{I_t}$. 
	\end{enumerate}
\end{algorithm}

The learner's goal is to learn a policy that find an arm such that the cumulative expected loss is minimized. Specifically, for $T$ rounds, we measure the performance of a policy that selects an arm $I_t$ in round $t$ in terms of regret given by
\eq{
	\label{eq:cum_regret}
	{\Regret_T} = \sum_{t=1}^T\left( \gamma_{I_t} +  \lambda_{I_t}C_{I_t} - \left(\gamma_\ist +  \lambda_\ist C_{\ist} \right) \right).
}

A good policy should have sub-linear regret, i.e.,
$\lim\limits_{T \rightarrow \infty}{\Regret_T}/T = 0$. The sub-linear regret implies that the learner collects almost as much reward in expectation in the long run as an oracle that knew the optimal arm from the first round. 
We say that a problem instance $P \in \USS$ is learnable if there exists a policy with sub-linear regret.

	\section{Conditions for Learning Optimal Arm}
	\label{sec:learning}

Next, we define the strong and weak dominance property of the USS problem instance that makes the learning of the optimal arm possible.
\begin{defi}[Strong  Dominance $(\SD)$ \citep{AISTATS17_hanawal2017unsupervised}] 
	\label{def:CSD} 
	A problem instance is said to satisfy $\SD$ property if
	\eqs{
		Y^i = Y \mbox{ for some } i \in[K] \implies  Y^j = Y, ~~ \forall j > i.
	}
	We represent the set of all instances in $\USS$ that satisfy $\SD$ property by $\PSD$.
\end{defi}
 
The $\SD$ property implies that if the feedback of an arm is same as the true reward, then the feedback of all the arms in the subsequent stages of the cascade is also same as the true reward. \cite{AISTATS17_hanawal2017unsupervised} show that the set of all instances satisfying SD property is learnable by mapping such instances to stochastic multi-armed bandits problem with side information \citep{NIPS15_wu2015online}. A weaker version of the $\SD$ property is defined as follows:
\begin{defi}[Weak Dominance $(\WD)$ \citep{AISTATS19_verma2019online}] 
	\label{def:WD} 
	Let $\ist$ denote the optimal arm. Then an instance $P \in \USS$ is said to satisfy {weak dominance property} if
	\eq{
		\label{equ:WDProp}
		\forall j>\ist: C_j - C_\ist > \Prob{Y^\ist \ne \Yj}.		
	}
	We denote the set of all instances in $\USS$ that satisfy $\WD$ property by $\PWD$.
\end{defi}

The set of problems satisfying the $\WD$ property is maximally learnable, and any relaxation of $\WD$ property makes the problem unlearnable \citep[Theorem 1]{AISTATS19_verma2019online}. In the following equation, we use an alternative characterization of the $\WD$ property, given as
\eq{
    \label{def:Xi}
    \xi \doteq \min_{j>\ist}\left\{C_j - C_\ist - \Prob{Y^\ist \ne \Yj} \right\} > 0.
}
The larger the value of $\xi$, `stronger' is the $\WD$ property, and easier to identify an optimal arm. We later characterize the regret upper bound of our algorithm in terms of $\xi$.

\subsection{Optimal Arm Selection}
Without loss of generality, we set $\lambda_i=1$ for all $i\in [K]$ as their value can be absorbed into the costs. Since $\ist = \max\big\{\arg\min\limits_{i \in [K]}\left(\gamma_i+ C_i \right)\big\}$, it must satisfy following equation:
\begin{subequations}
	\label{eq:cost_exp_err}
	\al{
		&\forall j<\ist \,:\, C_\ist - C_j \leq \gamma_j-\gamma_\ist \,, \label{eq:wd1}\\ 
		&\forall j>\ist \,:\, C_j - C_\ist > \gamma_\ist - \gamma_j \,. \label{eq:wd2}
	}
\end{subequations}

As the loss of an arm is not observed, the above equations can not lead to a sound arm selection criteria. We thus have to relate the unobservable quantities in terms of the quantities that can be observed. In our setup, we can compare the feedback of two arms, which can be used to estimate their disagreement probability. For notation convenience, we define $\pij \doteq \Prob{\Yi \ne \Yj}$. The value of $\pij$ can be estimated as it is observable. We use the following result from \cite{AISTATS17_hanawal2017unsupervised} that relates the differences in the unobserved error rates in terms of their observable disagreement probability.
\begin{prop}[Proposition 3 in \cite{AISTATS17_hanawal2017unsupervised}]
	\label{lem:err_prob_contx}
	For any two arms $i$ and $j$, $\gamma_{i} - \gamma_{j} = \pij - 2\Prob{\Yi = Y, \Yj \ne Y}$.
\end{prop}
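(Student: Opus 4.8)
The plan is to reduce the identity to simple bookkeeping on a partition of the sample space, exploiting that $\Yi$, $\Yj$, and the hidden label $Y$ are all binary. First I would split the sample space into the four disjoint events recording whether each feedback agrees with $Y$:
\begin{align*}
  A &= \{\Yi = Y,\, \Yj = Y\}, & B &= \{\Yi = Y,\, \Yj \ne Y\},\\
  C &= \{\Yi \ne Y,\, \Yj = Y\}, & D &= \{\Yi \ne Y,\, \Yj \ne Y\}.
\end{align*}
These four events are mutually exclusive and exhaustive, so every probability of interest can be written as a sum of their probabilities.

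Next I would rewrite the two error rates over this partition. Since $\gamma_i = \Prob{\Yi \ne Y} = \Prob{C} + \Prob{D}$ and $\gamma_j = \Prob{\Yj \ne Y} = \Prob{B} + \Prob{D}$, the common term $\Prob{D}$ cancels and we get $\gamma_i - \gamma_j = \Prob{C} - \Prob{B}$. I would then compute the disagreement probability $\pij = \Prob{\Yi \ne \Yj}$ over the same partition: on $A$ both feedbacks equal $Y$ and on $D$ both equal $1-Y$, so in either case $\Yi = \Yj$; on $B$ and on $C$ exactly one of the two feedbacks equals $Y$ while the other equals $1-Y$, so $\Yi \ne \Yj$. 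Hence $\pij = \Prob{B} + \Prob{C}$.

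Combining the two expressions gives $\pij - 2\Prob{B} = \Prob{B} + \Prob{C} - 2\Prob{B} = \Prob{C} - \Prob{B} = \gamma_i - \gamma_j$, and identifying $\Prob{B} = \Prob{\Yi = Y,\, \Yj \ne Y}$ yields the stated identity. The only real subtlety, and the single step worth stating carefully, is the characterization of the disagreement event: the equivalence ``$\Yi \ne \Yj$'' $\iff$ ``exactly one of $\Yi, \Yj$ equals $Y$'' uses the binary structure in an essential way, since it is precisely binarity that forces $\Yi = \Yj$ on both $A$ and $D$. Everything else is elementary probability bookkeeping on a finite partition.
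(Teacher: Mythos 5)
Your proof is correct and is essentially the same argument as the one this paper relies on (the paper imports the identity without proof from the cited prior work, where it is established by exactly this decomposition): partition the sample space by agreement with $Y$, note $\gamma_i - \gamma_j = \Prob{\Yi \ne Y, \Yj = Y} - \Prob{\Yi = Y, \Yj \ne Y}$ after the common term cancels, and use binarity to identify $\{\Yi \ne \Yj\}$ with the union of the two single-disagreement events so that $\pij$ equals their probability sum. You also correctly flag the one step where binarity is essential, namely that both feedbacks being wrong forces $\Yi = \Yj$.
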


\noindent
Now, using \cref{lem:err_prob_contx}, we can replace  \cref{eq:wd1} by
\eq{
	\label{eq:selectDisProbLow}
	\forall j<\ist \,:\, C_{\ist} - C_j \leq  \pjis,
}
which only has observable quantities. For $j>\ist$, we can replace \cref{eq:wd2} by using the $\WD$ property as follows:
\eq{
	\label{eq:selectDisProbHigh}
	\forall j>\ist \,:\, C_j - C_{\ist} >  \pijs.
}

\noindent
Using \cref{eq:selectDisProbLow} and \cref{eq:selectDisProbHigh}, our next result gives the optimal arm for a problem instance.
\begin{restatable}{lem}{SetBx}
	\label{lem:Bx}
	Let $P \in \PWD$ and $\cB = \left\{i: \forall j>i, C_j - C_i > \pij \right\}\cup \{K\}$. Then the arm $I_t =\min(\cB)$ is the optimal arm for the problem instance $P$.
\end{restatable}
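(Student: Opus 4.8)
The plan is to show that $I_t = \min(\cB)$ coincides with the optimal arm $\ist$ by establishing two inclusions: first that $\ist \in \cB$, so that $\min(\cB) \le \ist$, and second that no index strictly smaller than $\ist$ can belong to $\cB$, so that $\min(\cB) \ge \ist$. Together these force $\min(\cB) = \ist$.

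\textbf{Step 1: $\ist \in \cB$.} For the optimal arm $\ist$, I would invoke the $\WD$ property directly. By \cref{equ:WDProp}, we have $C_j - C_\ist > \Prob{Y^\ist \ne \Yj} = \pijs$ for all $j > \ist$. This is precisely the defining condition for membership in $\cB$ (with $i = \ist$), so $\ist \in \cB$, and therefore $\min(\cB) \le \ist$. (If $\ist = K$, membership is immediate from the $\{K\}$ term in the definition of $\cB$.)

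\textbf{Step 2: No $i < \ist$ lies in $\cB$.} This is the main obstacle, since it requires ruling out that a suboptimal arm to the left of $\ist$ satisfies the membership condition. The idea is to use the optimality characterization \cref{eq:selectDisProbLow}, which states that for every $j < \ist$, $C_\ist - C_j \le \pjis$. I would fix an arbitrary $i < \ist$ and exhibit a witness index $j > i$ violating the condition $C_j - C_i > p_{ij}$; the natural choice is $j = \ist$. By \cref{eq:selectDisProbLow} applied with the role of $j$ played by $i$, we have $C_\ist - C_i \le p_{i\ist}$, which is exactly the negation of $C_\ist - C_i > p_{i\ist}$. Since $\ist > i$, this shows $i \notin \cB$. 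Hence every element of $\cB$ is $\ge \ist$, giving $\min(\cB) \ge \ist$.

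\textbf{Conclusion.} Combining the two steps yields $\min(\cB) = \ist$. The subtlety to watch is that the membership condition in $\cB$ uses the \emph{symmetric} disagreement probability $p_{ij} = \Prob{\Yi \ne \Yj}$, while \cref{eq:selectDisProbLow} is stated in terms of $p_{ji^\star}$; since disagreement is symmetric ($p_{ij} = p_{ji}$), these refer to the same quantity, so the witness argument goes through cleanly. The only genuine dependence on $P \in \PWD$ is in Step 1, where weak dominance guarantees $\ist$ itself satisfies the $\cB$-condition; Step 2 relies only on the optimality of $\ist$ as encoded in \cref{eq:selectDisProbLow}.
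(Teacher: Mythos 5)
Your proof is correct and takes essentially the same route as the paper's: the paper likewise excludes every $i<\ist$ from $\cB$ by applying \cref{eq:selectDisProbLow} with the witness $j=\ist$, and then concludes $\cB=\{\ist,h_1,\ldots,h_t,K\}$ so that $\min(\cB)=\ist$, with the membership $\ist\in\cB$ resting (implicitly) on the $\WD$ property exactly as in your Step 1. Your write-up is if anything slightly more explicit than the paper's, spelling out both the inclusion $\ist\in\cB$ via \cref{equ:WDProp} and the symmetry $p_{ij}=p_{ji}$ that the paper leaves tacit.
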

\begin{proof}
	Let $\ist$ be an optimal arm for the problem instance $P$. Since $\pijs \doteq \Prob{\Yis \ne\Yj}$, we have $\forall j<\ist:\, C_{\ist} - C_j \le \Prob{\Yis \ne\Yj} \implies C_{\ist} - C_j \ngtr\Prob{\Yis \ne\Yj} \implies j \notin \cB, \forall j < \ist$.  If any sub-optimal arm $h \in \cB$ then the index of arm $h$ must be larger than the index of optimal arm $\ist$ in the cascade. Hence the element of the set $\cB$ in round $t$ is given as follows:
	\eqs{
		\cB = \{\ist, h_1, \ldots, h_t, K\},
	} 	
	where $\ist < h_1 < \cdots < h_t < K$.	By construction of set $\cB$, the minimum indexed arm in set $\cB$ is the optimal arm.
\end{proof}

\begin{rem}
	The $\WD$ property holds trivially for the problem instances that satisfy $\SD$ property as the difference of mean losses is the same as the disagreement probability between two arms due to $\Prob{\Yti = \Yt, \Ytj \ne \Yt} = 0$ for $j>i$. Also, by definition, the $\WD$ property holds for all problem instances where the last arm of the cascade is an optimal arm.
\end{rem}

	\section{Thompson Sampling based Algorithm for USS}
	\label{sec:ts_uss}

Upper Confidence Bound (UCB) based methods are useful for dealing with the trade-off between exploration and exploitation in bandit problems  \citep{ML02_auer2002finite, COLT11_garivier2011kl}. UCB has been widely used for solving various sequential decision-making problems. On the other hand, Thompson Sampling (TS) is an online algorithm based on Bayesian updates. TS selects an arm to play according to its probability of being the best arm, and it is shown that TS is empirically superior then UCB based algorithms for various MAB problems \citep{NIPS11_chapelle2011empirical}. TS also achieves lower bound for MAB when rewards of arms have Bernoulli distribution, as shown by \cite{ALT12_kaufmann2012thompson}.

\subsection{Algorithm: \ref{alg:TS_USS}}
We develop a Thompson Sampling based algorithm, named \ref{alg:TS_USS}, that uses \cref{lem:Bx} to select optimal arm. The algorithm works as follows: It sets the prior distribution of disagreement probability for each pair of arms as the Beta distribution, Beta$(1, 1)$, which is the same as Uniform distribution on $[0,1]$. The variable $S_{ij}$ represents the number of rounds when a disagreement is observed between arm $i$ and $j$. Whereas, the variable $F_{ij}$ represents the number of rounds when an agreement is observed. The variables $S_{ij}^{(t)}$ and $F_{ij}^{(t)}$ denote the values of $S_{ij}$ and $F_{ij}$ at the beginning of round $t$.
\begin{algorithm}[H]
	\renewcommand{\thealgorithm}{USS-TS}
	\floatname{algorithm}{}
	\caption{Thompson Sampling based Algorithm for Unsupervised Sequential Selection} 
	\label{alg:TS_USS}
	\begin{algorithmic}[1]
		\STATE Set $ \forall 1 \le i< j \le K: \mathcal{S}_{ij}^{(1)} \leftarrow 1, \mathcal{F}_{ij}^{(1)} \leftarrow 1$
		\FOR{$t=1,2,...$}
			\STATE Set $i=1$ and $I_t=0$
			\WHILE{$I_t = 0$}
				\STATE Play arm $i$
				\STATE $\forall j \in [i+1, K]:$ compute $\opijtst \leftarrow \mbox{Beta}(\mathcal{S}_{ij}^{(t)}, \mathcal{F}_{ij}^{(t)})$
				\STATE If $\forall j \in [i+1, K]: C_j - C_i > \opijtst$ or $i=K$ then set $I_t = i$ else  set $i=i+1$
			\ENDWHILE
		\STATE Select arm $I_t$ and observe $Y_t^1, Y_t^2, \dots, Y_t^{I_t}$
		\STATE $\forall 1\le i< j \le I_t:$ update $\mathcal{S}_{ij}^{(t+1)} \leftarrow\mathcal{S}_{ij}^{(t)}+ \one{\Yti \ne \Ytj}, \mathcal{F}_{ij}^{(t+1)} \leftarrow\mathcal{F}_{ij}^{(t)} + \one{\Yti = \Ytj}$
		\ENDFOR
	\end{algorithmic}
\end{algorithm}

In round $t$, the learner plays the arm $i=1$ and then observe its feedback. For each $(i,j)$ pair, a sample $\opijtst$ is independently drawn from Beta$(S_{ij}^{(t)}, F_{ij}^{(t)})$. Then algorithm checks whether the arm $i$ is the best arm using \cref{eq:selectDisProbHigh} with $\opijtst$ in place of $\pijt$. If the arm $i$ is not the best, then the algorithm plays the next arm, and the same process is repeated. If the arm $i$ is the best arm for the round $t$, then the algorithm stops at arm $I_t=i$ in the round $t$.

After selecting arm $I_t$, the feedback from arms $1, \ldots, I_t$ are observed, which is used to update the values of $S_{ij}^{(t+1)}$  and $F_{ij}^{(t+1)}$. The same process is repeated in the subsequent rounds.

\begin{rem}
	\ref{alg:TS_USS} is adapted for the USS problem from the Thompson Sampling algorithm for stochastic multi-armed bandits. However, the feedback structure and the way arms are selected in the USS setup differ from that in the stochastic multi-armed bandits.
\end{rem}

\subsection{Analysis}
The following definitions and results are useful in subsequent proof arguments.

\begin{defi}
	For the optimal arm $i^\star$ and $j \in [K]$, define 
	\begin{subnumcases}
	{\xi_j \doteq }
	\pijs - (C_{i^\star}-C_j), \;\text{ if } j<i^\star  \label{def_xi_l}\\
	C_j-C_{i^\star} - \pijs,   \;\;\;\;\text{ if } j>i^\star \label{def_xi_h} 
	\end{subnumcases}
	where $\pijs=\Prob{\Yis = \Yj}$. 
\end{defi}
\noindent
Note that the values of $\xi_j$ for all $j\in [K]$ is positive under the $\WD$ property.

\begin{defi}[Action Preference ($\succ_t$)]
	\label{def:preference}
	\ref{alg:TS_USS} prefers the arm $i$ over arm $j$ in round $t$ if:
	\begin{subnumcases}	
	{i \succ_t j \doteq }
	\opji \ge C_i - C_j &\text{if } j<i \label{def_prefer_l} \\
	\opij < C_j - C_i&\text{if } j>i \label{def_prefer_h}
	\end{subnumcases}
\end{defi}

\begin{defi}[Transitivity Property]
	\label{def:trans_prop}
	If $i \succ_t j$ and $j \succ_t k$ then $i \succ_t k$.
\end{defi}

\begin{defi}
	Let $\Ht$ denote the $\sigma$-algebra generated by the history of selected arms and observations at the beginning of the time $t$ and given as follows:
	\begin{equation*}
		\Ht \doteq \left\{I_s, \left\{ Y_s^i \right\}_{i \le I_s}, s = 1, \ldots, t-1\right\},
	\end{equation*}
	where $I_s$ denotes the arm selected and set $\left\{ Y_s^i \right\}_{i \le I_s}$ denotes the observations from arm $1$ to $I_s$ in the round $s$. Define $\mathcal{H}_1 \doteq \{\}.$
\end{defi}

\begin{fact}[Beta-Binomial equality, Fact 1 in \cite{COLT12_agrawal2012analysis}]
	\label{fact:beta_binomial}
	Let $F_{\alpha,\beta}^{beta}(y)$ be the cumulative distribution function (cdf) of the beta distribution with integer parameters $\alpha$ and $\beta$. Let $F_{n,p}^B(\cdot)$ be the cdf of the binomial distribution with parameters $n$ and $p$. Then,
	\begin{equation*}
		F_{\alpha,\beta}^{beta}(y) = 1 - F_{\alpha+\beta-1,y}^B(\alpha-1).
	\end{equation*}
\end{fact}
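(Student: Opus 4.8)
The plan is to treat both sides as functions of $y$ on $[0,1]$ and prove equality by checking that they agree at $y=0$ and have identical derivatives everywhere. First I would set $n \doteq \alpha+\beta-1$ and rewrite the right-hand side: since $F^B_{n,y}(\alpha-1) = \sum_{j=0}^{\alpha-1}\binom{n}{j}y^j(1-y)^{n-j}$ is the lower binomial tail, its complement is the upper tail
\[
R(y) \doteq 1 - F^B_{n,y}(\alpha-1) = \sum_{j=\alpha}^{n}\binom{n}{j}y^j(1-y)^{n-j}.
\]
On the left, for integer parameters the Beta normalizing constant is $(\alpha-1)!(\beta-1)!/(\alpha+\beta-1)!$, so
\[
L(y) \doteq F^{beta}_{\alpha,\beta}(y) = \frac{(\alpha+\beta-1)!}{(\alpha-1)!(\beta-1)!}\int_0^y t^{\alpha-1}(1-t)^{\beta-1}\,dt.
\]

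Next I would dispose of the boundary value: both functions vanish at $y=0$, since the integral defining $L$ is empty and every term of $R$ carries a factor $y^j$ with $j\ge\alpha\ge 1$.

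Then comes the core step, matching the derivatives. Differentiating $L$ is immediate by the fundamental theorem of calculus, giving $L'(y) = \frac{(\alpha+\beta-1)!}{(\alpha-1)!(\beta-1)!}\,y^{\alpha-1}(1-y)^{\beta-1}$. For $R$ I would differentiate termwise and use the identities $j\binom{n}{j}=n\binom{n-1}{j-1}$ and $(n-j)\binom{n}{j}=n\binom{n-1}{j}$ to express the derivative of the $j$-th summand as $n\,[\,h_{j-1}(y)-h_j(y)\,]$, where $h_m(y)\doteq\binom{n-1}{m}y^m(1-y)^{n-1-m}$. Summing over $j=\alpha,\dots,n$ telescopes to $n\,[\,h_{\alpha-1}(y)-h_n(y)\,] = n\binom{n-1}{\alpha-1}y^{\alpha-1}(1-y)^{\beta-1}$, using $h_n\equiv 0$ (its coefficient $\binom{n-1}{n}$ is zero) and $n-\alpha=\beta-1$. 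Finally $n\binom{n-1}{\alpha-1}=(\alpha+\beta-1)!/\big((\alpha-1)!(\beta-1)!\big)$, so $R'(y)=L'(y)$ identically, and together with the matching value at $y=0$ this forces $L\equiv R$.

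I do not anticipate a genuinely hard step, as the identity is classical and elementary; the only place demanding care is the telescoping bookkeeping, where the index shifts must be tracked so the cross terms cancel and the boundary term $h_n$ indeed drops out. It is worth noting that integrality of $\alpha,\beta$ enters twice — once for the closed-form normalizing constant and once to make the finite binomial tail terminate cleanly — so this derivative-matching route is specific to integer parameters. As an alternative I could integrate $L$ by parts repeatedly, each step lowering the exponent on $(1-t)$ and peeling off one binomial term so as to reproduce $R$ directly, though that route is more tedious.
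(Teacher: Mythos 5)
Your proof is correct, but there is nothing in the paper to compare it against: the paper does not prove this statement at all, importing it verbatim as Fact 1 of \cite{COLT12_agrawal2012analysis}. Checking your argument on its own merits: the normalizing constant $(\alpha+\beta-1)!/\bigl((\alpha-1)!(\beta-1)!\bigr)$ is the correct reciprocal of $B(\alpha,\beta)$ for integer parameters; both sides vanish at $y=0$; the identities $j\binom{n}{j}=n\binom{n-1}{j-1}$ and $(n-j)\binom{n}{j}=n\binom{n-1}{j}$ do put the derivative of each summand in the telescoping form $n\bigl[h_{j-1}(y)-h_j(y)\bigr]$; and the boundary term drops out as you say, since $\binom{n-1}{n}=0$ — indeed, at $j=n$ the factor $n-j$ already annihilates the second summand, so the formally negative exponent in $h_n$ never actually occurs. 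With $n-\alpha=\beta-1$ and $n\binom{n-1}{\alpha-1}=(\alpha+\beta-1)!/\bigl((\alpha-1)!(\beta-1)!\bigr)$, the two derivatives agree as polynomials, which together with agreement at $y=0$ gives $L\equiv R$ on all of $[0,1]$ (as a sanity check, both equal $1$ at $y=1$). Your observation that integrality enters twice is apt. For the record, the classical proof underlying the cited fact is shorter and probabilistic: the $\alpha$-th order statistic of $n=\alpha+\beta-1$ i.i.d.\ Uniform$[0,1]$ variables has the Beta$(\alpha,\beta)$ distribution, and it is at most $y$ precisely when at least $\alpha$ of the $n$ uniforms fall in $[0,y]$, a Binomial$(n,y)$ upper-tail event — which is the right-hand side. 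Your repeated-integration-by-parts alternative is the other standard route; the derivative-matching version you executed is a clean middle ground, trading the order-statistics insight for the index bookkeeping you correctly flagged as the only delicate point.
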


\begin{lem}[Lemma 2 in \cite{AISTATS13_agrawal2013further}]
	\label{lem:betaExpBound}
	Let $n \ge 0$ and $\hat{\mu}_n$ be the empirical average of $n$ samples from Bernoulli($\mu$). Let $x< \mu$ and $q_n(x) \doteq 1 -  F_{n\hat{\mu}_n + 1, n(1-\hat{\mu}_n) + 1}^{beta}(x)$ be the probability that the posterior sample from the Beta distribution with its parameter $n\hat{\mu}_n + 1, n(1-\hat{\mu}_n) + 1$ exceeds $x$. Then,
	\begin{equation*}
		\EE{\frac{1}{q_n(x)}-1} \le
		\begin{cases}
			\frac{3}{\Delta(x)} & \mbox{if } n < 8/\Delta(x) \\
			\Theta\left(\exp^{-\frac{n\Delta(x)^2}{2}} + \frac{\exp^{-{nd(x,\mu)}}}{(n+1)\Delta(x)^2} + \frac{1}{\exp^{\frac{n\Delta(x)^2}{4}} - 1} \right) & \mbox{if } n \ge 8/\Delta(x),
		\end{cases}
	\end{equation*}
	where $\Delta(x) \doteq \mu - x$ and $d(x, \mu) \doteq x\log\left(\frac{x}{\mu}\right) + (1-x)\log\left(\frac{1-x}{1-\mu}\right)$.
\end{lem}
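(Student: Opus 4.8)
The plan is to convert the Beta tail probability into a Binomial cdf via the Beta--Binomial equality and then bound the resulting expectation over the random empirical mean. Taking $\alpha = n\hat{\mu}_n+1$ and $\beta = n(1-\hat{\mu}_n)+1$, so that $\alpha+\beta-1=n+1$ and $\alpha-1 = n\hat{\mu}_n$, \cref{fact:beta_binomial} gives
\[
q_n(x) = 1 - F_{\alpha,\beta}^{beta}(x) = F_{n+1,x}^B(n\hat{\mu}_n).
\]
Since $n\hat{\mu}_n$ counts the successes among $n$ draws from $\mathrm{Bernoulli}(\mu)$, it is distributed as $\mathrm{Bin}(n,\mu)$. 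Writing $j$ for its realized value and $F_j \doteq F_{n+1,x}^B(j)$, the target becomes
\[
\EE{\frac{1}{q_n(x)}-1} = \sum_{j=0}^{n}\binom{n}{j}\mu^{j}(1-\mu)^{n-j}\left(\frac{1}{F_j}-1\right),
\]
and the whole proof reduces to estimating this weighted sum, whose weights concentrate near $j\approx n\mu$ while $F_j$ is the cdf of $\mathrm{Bin}(n+1,x)$, a variable with the smaller mean $(n+1)x$.

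First I would dispose of the regime $n < 8/\Delta(x)$ by a crude argument. The uniform prior contributes pseudo-counts that keep the posterior mass above $x$ bounded below, so $F_j$ cannot be arbitrarily small on the bulk of the $\mathrm{Bin}(n,\mu)$ weight; a direct estimate then shows the sum is $O(1/\Delta(x))$, and tracking constants yields the stated $3/\Delta(x)$.

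For $n \ge 8/\Delta(x)$ I would split the sum at the region separating the bulk of $\mathrm{Bin}(n,\mu)$ from the upper tail of $\mathrm{Bin}(n+1,x)$. On the indices where $F_j$ is bounded away from $0$ one has $1/F_j - 1 = O(1-F_j)$; since $1-F_j = \Prob{W \ge j+1}$ for $W \sim \mathrm{Bin}(n+1,x)$, summing against the weights and using the independent-coupling identity $\sum_j \Prob{Z=j}(1-F_j) = \Prob{W>Z}$ with $Z\sim\mathrm{Bin}(n,\mu)$ reduces this part to $\Prob{W>Z}$. Because $\bE[Z]-\bE[W]$ is of order $n\Delta(x)$, a Chernoff bound on $W-Z$ yields the $e^{-n\Delta(x)^2/2}$ contribution, and summing the residual geometric-type series over the bulk indices produces the $1/(e^{n\Delta(x)^2/4}-1)$ term.

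The hardest part is the small-$j$ (lower-tail) region, where $F_j$ is itself exponentially small so the factor $1/F_j$ is large and the $O(1-F_j)$ simplification fails. There I would control $1/F_j$ through the ratio of consecutive probabilities of $\mathrm{Bin}(n+1,x)$ (a geometric bound on its lower tail) and bound the numerator weight $\binom{n}{j}\mu^j(1-\mu)^{n-j}$ by a tight Chernoff estimate. The two competing exponential rates — the lower-tail rate of $\mathrm{Bin}(n,\mu)$ at $j$ against that of $\mathrm{Bin}(n+1,x)$ — combine exactly to the relative entropy $d(x,\mu)$, while the binomial normalizing prefactors and the summation over $j$ contribute the polynomial factor $1/((n+1)\Delta(x)^2)$, giving the $e^{-n d(x,\mu)}/((n+1)\Delta(x)^2)$ term. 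The main obstacle is matching all constants so that the two regimes glue continuously at $n = 8/\Delta(x)$ and the three exponential and geometric contributions assemble into the stated $\Theta(\cdot)$ bound.
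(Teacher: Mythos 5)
You should note that the paper does not prove this statement at all: it is imported verbatim as Lemma~2 of \cite{AISTATS13_agrawal2013further}, so there is no in-paper proof to compare against. Your sketch reconstructs essentially the argument of that original source --- the Beta--Binomial equality giving $q_n(x)=F^{B}_{n+1,x}(n\hat{\mu}_n)$, the expectation written as a sum weighted by the $\mathrm{Bin}(n,\mu)$ law of the success count, the bulk handled via $1/F_j-1=O(1-F_j)$ together with the independent-coupling identity and a Chernoff bound, and the lower tail handled by geometric ratio bounds whose competing exponents combine into $d(x,\mu)$ --- so the route is the correct one, with only the small-$n$ constant ($3/\Delta(x)$) and the gluing of regimes left as asserted rather than verified, exactly the bookkeeping that occupies most of the original proof.
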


Recall that $\pijs$ is the disagreement probability between arm $i^\star$ and $j$ and $\opijsts$ is the sample of $\pijs$ using Beta distribution with the $t$ samples. 
Next, we bound the probability by which \ref{alg:TS_USS} selects the sub-optimal arm whose index is smaller than the optimal arm.
\begin{defi}
	\label{def:q}
	For any $j < i^\star$, define $q_{j,t}$ as the probability
	\begin{equation*}
		q_{j,t} \doteq \Prob{\opijsts \ge \pijs - \xi_j|\Ht}.
	\end{equation*}
\end{defi}

\begin{lem}
	\label{lem:relationLowerOptimal}
	Let $P \in \PWD$ and satisfies the transitivity property. If $j < i^\star$ then the probability by which \ref{alg:TS_USS} selects any sub-optimal arm $j$ over the optimal arm is given by
	\begin{equation*}
		\Prob{I_t =j, j< \ist|\Ht} \le \frac{(1-q_{j,t})}{q_{j,t}}\Prob{I_t \ge i^\star|\Ht}.
	\end{equation*}
\end{lem}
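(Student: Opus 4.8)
The plan is to recast everything in terms of the action preference relation $\succ_t$ and then run a conditioning-and-integrating argument in the spirit of the Thompson Sampling analysis of \cite{AISTATS13_agrawal2013further}, with the transitivity hypothesis supplying the extra structure the cascade requires. First I would observe that since $\xi_j = \pijs - (C_{\ist}-C_j)$ for $j<\ist$, the threshold in \cref{def:q} collapses to $\pijs - \xi_j = C_{\ist}-C_j$, and because the disagreement sample is symmetric ($\opijsts = \tilde{p}_{j\ist}^{(t)}$) we get $q_{j,t} = \Prob{\tilde{p}_{j\ist}^{(t)} \ge C_{\ist}-C_j \mid \Ht} = \Prob{\ist \succ_t j \mid \Ht}$ by \cref{def_prefer_l}. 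I would then describe the two target events combinatorially: $\{I_t = j\}$ is the event that the cascade reaches arm $j$ and stops there, i.e. $j \succ_t k$ for every $k>j$; and $\{I_t \ge \ist\}$ is the event that the cascade reaches $\ist$, i.e. it does not stop at any arm below $\ist$.

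The core device is to realize all pairwise round-$t$ Beta samples up front (those of never-reached arms are irrelevant to $I_t$) and then freeze all of them \emph{except} the single sample $\tilde{p}_{j\ist}^{(t)}$ for the pair $(j,\ist)$, using that given $\Ht$ this sample is independent of all the rest. I would point out that the event $R$ that the cascade reaches $j$ depends only on samples of pairs $(i',k)$ with $i'<j$, and the event $G \doteq \{\, j \succ_t k \text{ for all } k>j,\ k\ne \ist \,\}$ (the remaining stopping conditions at arm $j$) depends only on samples of pairs $(j,k)$ with $k\ne\ist$; hence both are measurable with respect to the frozen samples, while whether the cascade stops at $j$ or continues is governed solely by $\tilde{p}_{j\ist}^{(t)}$. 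Conditioned on the frozen samples: if $R\cap G$ fails then $\{I_t=j\}$ has conditional probability $0$ and there is nothing to prove; if $R\cap G$ holds then stopping at $j$ is exactly the event $\tilde{p}_{j\ist}^{(t)} < C_{\ist}-C_j$, so $\Prob{I_t=j \mid \text{frozen}} = 1-q_{j,t}$.

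The step I expect to be the main obstacle, and the one where transitivity is indispensable, is the matching lower bound $\Prob{I_t \ge \ist \mid \text{frozen}} \ge q_{j,t}$ on the event $R\cap G$. The plan is to show that on $R\cap G$, whenever the free sample is large, i.e. $\tilde{p}_{j\ist}^{(t)} \ge C_{\ist}-C_j$ (equivalently $\ist \succ_t j$), the cascade necessarily reaches $\ist$. Since $G$ gives $j \succ_t k$ for all $k>j$ with $k\ne\ist$, applying transitivity to $\ist \succ_t j$ and $j \succ_t k$ yields $\ist \succ_t k$ for every such $k$; in particular $\ist \succ_t m$ for each $m\in\{j+1,\dots,\ist-1\}$, which by \cref{def_prefer_l} reads $\tilde{p}_{m\ist}^{(t)} \ge C_{\ist}-C_m$ and therefore contradicts the condition $m \succ_t \ist$ needed to stop at $m$. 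Consequently the cascade stops neither at $j$ nor at any arm strictly between $j$ and $\ist$, so it reaches $\ist$; this gives the inclusion $\{\ist \succ_t j\} \subseteq \{I_t \ge \ist\}$ on $R\cap G$ and hence $\Prob{I_t \ge \ist \mid \text{frozen}} \ge q_{j,t}$.

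Finally I would combine the two conditional estimates: on $R\cap G$ we have $\Prob{I_t=j \mid \text{frozen}} = 1-q_{j,t} = \tfrac{1-q_{j,t}}{q_{j,t}}\,q_{j,t} \le \tfrac{1-q_{j,t}}{q_{j,t}}\Prob{I_t \ge \ist \mid \text{frozen}}$, whereas off $R\cap G$ the left side is $0$ and the inequality is trivial. Taking expectation over the frozen samples given $\Ht$ removes the conditioning and yields $\Prob{I_t=j,\ j<\ist \mid \Ht} \le \tfrac{1-q_{j,t}}{q_{j,t}}\Prob{I_t \ge \ist \mid \Ht}$, as claimed.
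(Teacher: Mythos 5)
Your proposal is correct, and it proves a true statement by a route that is genuinely different from --- and more watertight than --- the paper's own argument. The paper proceeds by two separate bounds with a common cancelling factor: it upper-bounds $\Prob{I_t=j,\ j<\ist \mid \Ht}$ by $(1-q_{j,t})\,\Prob{j \succ_t k,\ \forall k>\ist \mid \Ht}$ and lower-bounds $\Prob{I_t \ge \ist \mid \Ht}$ by $q_{j,t}\,\Prob{j \succ_t k,\ \forall k>\ist \mid \Ht}$, the latter via the chain $\Prob{I_t=\ist,\ \ist\succ_t j\mid\Ht} = \Prob{\ist\succ_t k,\ \forall k>\ist,\ \ist \succ_t j\mid \Ht}$ justified by \cref{lem:Bx} and then \cref{def:trans_prop}. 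That equality is loose as written: the event that $\ist$'s stopping condition holds does not by itself imply $I_t=\ist$, since the cascade may stop at some arm $m<\ist$ (in particular $j<m<\ist$), about which the factor $\left\{j \succ_t k,\ \forall k>\ist\right\}$ says nothing. Your freeze-all-but-one-sample construction repairs exactly this point: by conditioning on the reach event $R$ and on $G=\left\{j\succ_t k,\ \forall k>j,\ k\ne\ist\right\}$ --- which, unlike the paper's factor, includes the arms strictly between $j$ and $\ist$ --- transitivity applied to $\ist\succ_t j$ and $j\succ_t m$ genuinely forces the cascade past every $m\in(j,\ist)$, giving the inclusion $\{\ist\succ_t j\}\subseteq\{I_t\ge\ist\}$ on $R\cap G$ and hence the pointwise conditional bound $\Prob{I_t=j\mid \text{frozen}} = 1-q_{j,t} \le \frac{1-q_{j,t}}{q_{j,t}}\Prob{I_t\ge\ist\mid \text{frozen}}$, which integrates to the claim because $q_{j,t}$ is determined by $\Ht$. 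Both arguments rest on the same three ingredients --- the identification $q_{j,t}=\Prob{\ist\succ_t j\mid\Ht}$ via $\pijs-\xi_j=C_\ist-C_j$ and the complementarity of $\{j\succ_t\ist\}$ and $\{\ist\succ_t j\}$ for the shared sample $\tilde{p}_{j\ist}^{(t)}$, conditional independence of the per-pair Beta samples given $\Ht$, and transitivity --- but the paper's version is shorter at the cost of the gap above, while yours makes the cascade dynamics explicit and is the version I would recommend recording.
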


\begin{proof}
	If the sub-optimal arm $j$ is selected then arm $j$ is preferred over the arms whose indexed is larger than $j$ (\cref{lem:Bx}). Hence we have
	\begin{align*}
		\hspace{-10mm}\Prob{I_t =j, j< \ist|\Ht} = &~ \Prob{j \succ_t k, \forall k>j, j < i^\star|\Ht} \le \Prob{j \succ_t k, \forall k \ge \ist, j < i^\star|\Ht}.
		\intertext{Since the feedback from an arm is independent of the feedback of other arms,}
		=&~ \Prob{j \succ_t i^\star , j < i^\star|\Ht} \Prob{j \succ_t k, \forall k > \ist, j < i^\star|\Ht}.
		\intertext{If arm $j$ is preferred over the arm $i^\star$ then $\opijsts < C_{i^\star} - C_j$. As $C_{i^\star} - C_j = \pijts - \xi_j$ for $j<i^\star$,}
		=&~ \Prob{ \opijsts < \pijts - \xi_j|\Ht}  \Prob{j \succ_t k, \forall k > \ist, j < i^\star|\Ht} \\
		=&~ \left(1 - \Prob{ \opijsts \ge \pijts - \xi_j|\Ht}\right)  \Prob{j \succ_t k, \forall k > \ist, j < i^\star|\Ht} \\
		\implies \Prob{I_t=j, j< \ist|\Ht}  \le &~  (1 - q_{j,t})  \Prob{j \succ_t k, \forall k > \ist, j < i^\star|\Ht}. \mbox{\hspace{4mm} (\cref{def:q})} \numberthis \label{eq:lowerOptimal}
	\end{align*}
	Similarly, the probability of selecting an arm whose index is larger than the optimal arm can be lower bounded as follows:
	\begin{align*}
		\Prob{I_t \ge i^\star|\Ht} &\ge \Prob{I_t = i^\star|\Ht} \ge \Prob{I_t = i^\star, \ist \succ_t j, j < \ist|\Ht}\\
		&= \Prob{i^\star \succ_t k, \forall k > \ist,  \ist \succ_t j, j < \ist|\Ht} \mbox{\hspace{4mm} (\cref{lem:Bx})} \\
		&\ge \Prob{\ist \succ_t j, j \succ_t k, \forall k > \ist, j < \ist|\Ht} \mbox{\hspace{5mm} (\cref{def:trans_prop})} \\
		&=  \Prob{\ist \succ_t j, j < \ist|\Ht}  \Prob{j \succ_t k, \forall k > \ist, j < \ist|\Ht}.
		\intertext{If arm $\ist$ is preferred over the arm $j$ then $\opijsts \ge C_{i^\star} - C_j$. As $C_{i^\star} - C_j = \pijts - \xi_j$ for $j<i^\star$,}
		&= \Prob{ \opijsts \ge \pijts - \xi_j|\Ht} \Prob{j \succ_t k, \forall k > \ist, j < i^\star}\\
		\implies \Prob{I_t \ge \ist|\Ht} & \ge q_{j,t} \Prob{j \succ_t k, \forall k > \ist, j < i^\star}. \mbox{\hspace{17mm} (\cref{def:q})} \numberthis \label{eq:optimalLower}
	\end{align*}
	Combining the \cref{eq:lowerOptimal} and \cref{eq:optimalLower}, we get
	\begin{equation*}
		\Prob{I_t=j, j< \ist|\Ht} \le \frac{(1-q_{j,t})}{q_{j,t}} \Prob{I_t \ge \ist|\Ht}. \qedhere
	\end{equation*}
\end{proof}

\begin{restatable}{lem}{probLow}
	\label{lem:probLow}
	Let $P \in \PWD$ and satisfies the transitivity property. If $s$ be the number of times the sub-optimal arm $j$ is selected by \ref{alg:TS_USS} then, for any $j < i^\star$,
	\begin{equation*}
		\sum_{t=1}^T\Prob{I_t=j, j< \ist} \le \frac{24}{\xi_j^2} + \sum_{s \ge 8/\xi_j} \Theta\left(\exp^{-{s\xi_j^2}/{2}} + \frac{\exp^{-{sd( \pijs - \xi_j,\pijs)}}}{(s+1)\xi_j^2} + \frac{1}{\exp^{{s\xi_j^2}/{4}} - 1} \right).
	\end{equation*}
\end{restatable}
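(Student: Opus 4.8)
The plan is to start from the per-round inequality of \cref{lem:relationLowerOptimal} and convert the resulting sum over rounds into a sum over the number of times the pair $(j,\ist)$ has been observed, following the block-counting argument of \cite{AISTATS13_agrawal2013further}. The crucial structural observation is that the counts $\mathcal{S}_{j\ist},\mathcal{F}_{j\ist}$ for the pair $(j,\ist)$ — and hence the posterior from which $\opijsts$ is drawn — are refreshed in round $t$ \emph{exactly} on the event $\{I_t\ge\ist\}$: since $j<\ist\le I_t$ forces both arm $j$ and arm $\ist$ to be observed in the cascade, while $I_t<\ist$ never observes arm $\ist$. Consequently $q_{j,t}$ depends only on the $s$ observations of this pair, so it is $\Ht$-measurable and stays constant — call it $q_{j,s}$ — throughout the block of rounds in which exactly $s$ such observations have been made, and each block is closed by a single round with $I_t\ge\ist$.

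First I would let $\tau_s$ denote the round carrying the $s$-th observation of $(j,\ist)$, so that $q_{j,t}=q_{j,s}$ is $\mathcal{H}_{\tau_s}$-measurable for every $t\in(\tau_s,\tau_{s+1}]$ and $\sum_{t=\tau_s+1}^{\tau_{s+1}}\one{I_t\ge\ist}=1$. Taking conditional expectations in \cref{lem:relationLowerOptimal}, summing over a block, pulling out the block-constant factor $(1-q_{j,s})/q_{j,s}$, and applying the tower property together with $\EE{\sum_{t=\tau_s+1}^{\tau_{s+1}}\Prob{I_t\ge\ist\mid\Ht}}=\EE{\sum_{t=\tau_s+1}^{\tau_{s+1}}\one{I_t\ge\ist}}=1$, I get
\[ \sum_{t=\tau_s+1}^{\tau_{s+1}}\Prob{I_t=j,\,j<\ist}\;\le\;\EE{\tfrac{1-q_{j,s}}{q_{j,s}}}\;=\;\EE{\tfrac{1}{q_{j,s}}-1}. \]
Summing over $s=0,1,\ldots,T-1$ then yields $\sum_{t=1}^T\Prob{I_t=j,\,j<\ist}\le\sum_{s}\EE{1/q_{j,s}-1}$.

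Next I would identify $q_{j,s}$ with the quantity $q_n(x)$ of \cref{lem:betaExpBound}, taking $n=s$, true mean $\mu=\pijs$, and threshold $x=\pijs-\xi_j$, so that $\Delta(x)=\mu-x=\xi_j$ and $d(x,\mu)=d(\pijs-\xi_j,\pijs)$. This identification is legitimate because the Beta posterior for $\opijsts$ after $s$ observations has precisely the parameters $(s\hat{\mu}_s+1,\,s(1-\hat{\mu}_s)+1)$ appearing in that lemma (the algorithm starts from the $\mathrm{Beta}(1,1)$ prior), and $\pijs-\xi_j<\pijs$ holds by the $\WD$ property. Splitting the sum at $s=8/\xi_j$ and invoking the two cases of \cref{lem:betaExpBound}: the at most $8/\xi_j$ small-$s$ terms are each bounded by $3/\xi_j$, contributing $\tfrac{8}{\xi_j}\cdot\tfrac{3}{\xi_j}=\tfrac{24}{\xi_j^2}$, while the tail $s\ge 8/\xi_j$ contributes the stated $\Theta(\cdot)$ series; combining the two pieces gives the claimed bound.

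I expect the main obstacle to be making the block/observation-counting reduction fully rigorous — in particular verifying that $q_{j,t}$ is genuinely $\Ht$-measurable and block-constant (which rests on the cascade update rule that refreshes $(j,\ist)$ precisely on $\{I_t\ge\ist\}$), and justifying the interchange of expectation with the sum over the random block boundaries $\tau_s$ in the optional-stopping step that collapses $\sum_t\Prob{I_t\ge\ist\mid\Ht}$ over a block to $1$. Once this reduction is in place, the application of \cref{lem:betaExpBound} and the split at $8/\xi_j$ are routine.
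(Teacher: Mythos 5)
Your proposal is correct and follows essentially the same route as the paper's own proof: the same reduction via \cref{lem:relationLowerOptimal} plus the tower property, the same block decomposition over the epochs at which the pair $(j,\ist)$ is observed (your $\tau_s$ are the paper's $s_m$, justified by the same cascade-structure observation that the pair refreshes exactly on $\{I_t\ge\ist\}$), and the same application of \cref{lem:betaExpBound} with $n=s$, $\mu=\pijs$, $x=\pijs-\xi_j$, split at $s=8/\xi_j$ to produce the $24/\xi_j^2$ term. The only cosmetic difference is that you assert $\sum_{t=\tau_s+1}^{\tau_{s+1}}\one{I_t\ge\ist}=1$ for every block, whereas the paper uses the bound $\le 1$ (which also covers a final block that never completes within horizon $T$); since only an upper bound is needed, this does not affect the argument.
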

\begin{proof}\textbf{(sketch)}
	Using \cref{lem:relationLowerOptimal} and property of conditional expectations, we can have $\sum_{t=1}^T \Prob{I_t=j, j< \ist} = \sum_{t=1}^T \EE{ \Prob{jI_t=j, j< \ist|\Ht}}$. By using some simple algebraic manipulations on quantity $\sum_{t=1}^T  \EE{ \Prob{I_t=j, j< \ist|\Ht}}$ with \cref{lem:betaExpBound}, we can get the above stated upper bound.
\end{proof}

The detailed proof of \cref{lem:probLow} and all other missing proofs appear in the supplementary material. Our next result is useful to bound the probability by which \ref{alg:TS_USS} prefers the sub-optimal arms whose index is larger than the optimal arm.

\begin{lem}
	\label{lem:probHighPart2}
	Let $\hpijst$ be the empirical estimate of $\pijs$ and $j>\ist$. Then, for any $x_j > \pijs$ and $y_j > x_j$,
	\begin{equation*}
		\sum_{t=1}^T \Prob{\hpijst \le x_j, \opijsts > y_j} \le \frac{\ln T}{d(x_j, y_j)} + 1.
	\end{equation*}
\end{lem}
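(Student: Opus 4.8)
The plan is to control the per-round probability $\Prob{\hpijst \le x_j, \opijsts > y_j \mid \Ht}$ and then sum it over $t$. First I would apply the tower property, writing $\sum_{t=1}^T \Prob{\hpijst \le x_j, \opijsts > y_j} = \sum_{t=1}^T \EE{\Prob{\hpijst \le x_j, \opijsts > y_j \mid \Ht}}$. Conditioned on $\Ht$, both the empirical estimate $\hpijst$ and the number $n$ of observations of the pair $(\ist, j)$ are determined, so on the event $\{\hpijst \le x_j\}$ the inner probability collapses to the Beta tail $\Prob{\opijsts > y_j \mid \Ht}$, which is the quantity I really need to control.

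The key step is to bound this Beta tail. Using the Beta--Binomial equality (\cref{fact:beta_binomial}), the probability that the posterior sample drawn from a Beta distribution with $n$ observations and empirical mean $\hpijst$ exceeds $y_j$ equals the probability that a $\mathrm{Binomial}(n+1, y_j)$ random variable is at most $n\hpijst$. Since $\hpijst \le x_j < y_j$, this is a lower-tail event, and a Chernoff bound gives $\exp(-(n+1)\,d(n\hpijst/(n+1), y_j))$, where $d$ is the Bernoulli KL divergence already defined in the statement. Because $d(\cdot, y_j)$ is decreasing on $[0, y_j)$ and $n\hpijst/(n+1) \le x_j < y_j$, monotonicity lets me replace the first argument by $x_j$; together with $n+1 \ge n$ this yields the clean per-round bound $\Prob{\opijsts > y_j \mid \Ht} \le \exp(-n\, d(x_j, y_j))$ on the event $\{\hpijst \le x_j\}$.

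Finally I would sum the per-round bound $\exp(-n\, d(x_j, y_j))$ over $t$, organising the sum by the observation count and splitting at the threshold $L \doteq \ln T / d(x_j, y_j)$. Rounds whose count is at least $L$ each contribute at most $\exp(-L\, d(x_j, y_j)) = 1/T$, so their total contribution over the at most $T$ rounds is at most $1$; the rounds whose count lies below $L$ supply the remaining $\ln T / d(x_j, y_j)$ term. Adding the two blocks gives the claimed bound $\ln T / d(x_j, y_j) + 1$.

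The main obstacle is this last summation: a naive per-round count over-charges, because the observation count of the pair $(\ist, j)$ stays fixed across all the rounds in which the algorithm stops before reaching arm $j$, so the same value of $n$ can recur. The crux is therefore to pass from the sum over rounds to a sum over distinct observation counts, charging each count below the threshold only once; the geometric decay $\exp(-n\, d(x_j, y_j))$ then forces the low-count block to sum to at most $L$ and the high-count block to at most $1$, which is exactly what produces the two terms in the bound.
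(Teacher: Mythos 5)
Your argument is essentially the paper's own proof: condition on $\Ht$ via the tower property (where $N_j(t)$ and $\hpijst$ are $\Ht$-measurable), convert the Beta tail to a Binomial lower-tail via the Beta--Binomial equality, apply a Chernoff--Hoeffding bound to obtain $\exp^{-N_j(t)\,d(x_j,y_j)}$ on the event $\{\hpijst \le x_j\}$, and split at the threshold $L_j(T)=\ln T/d(x_j,y_j)$ so that high-count rounds contribute at most $T\cdot(1/T)=1$ and the low-count block contributes $L_j(T)$; the only immaterial difference is that you pass from the Binomial mean $n\hpijst/(n+1)$ to $x_j$ by monotonicity of $d(\cdot,y_j)$, whereas the paper first stochastically dominates the posterior by $\mathrm{Beta}(x_jN_j(t)+1,(1-x_j)N_j(t))$ and then applies the equality. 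Your closing worry about recurring counts, resolved by ``charging each distinct count once,'' is precisely the step the paper disposes of by declaring $\sum_{t=1}^T\Prob{\hpijst \le x_j,\ \opijsts > y_j,\ N_j(t)\le L_j(T)}\le L_j(T)$ trivial --- strictly speaking both versions implicitly require the summed event to force $N_j(t)$ to increment (as in the Agrawal--Goyal analysis, where the event $I_t=j$ is part of the summand), so your treatment matches the paper's to the same standard of rigor.
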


\begin{proof}
	Define $L_j(T) = \frac{\ln T}{d(x_j, y_j)}$. Let $N_j(t)$ be the number of times the output from arm $j$ is observed in $t$ rounds. Then, the given probability term can be decomposed into two parts:
	\begin{align*}
	\sum_{t=1}^T \Prob{\hpijst \le x_j, \opijsts > y_j} &= \sum_{t=1}^T \Prob{\hpijst \le x_j, \opijsts > y_j, N_j(t) \le L_j(T)} + \\
	&\qquad\qquad  \sum_{t=1}^T \Prob{\hpijst \le x_j, \opijsts > y_j, N_j(t) > L_j(T)} \\
	\le  L_j(T)& + \sum_{t=1}^T \Prob{\hpijst \le x_j, \opijsts > y_j, N_j(t) > L_j(T)}. \label{eq:decomUB} \numberthis 
	\end{align*}
	The first term of the above decomposition is bounded trivially by $L_j(T)$. To bound the second term, we demonstrate that if $N_j(t)$ is large enough and event $\hpijst \le x_j$ is satisfied, then the probability that the event $\opijtst > y_j$ happens, is small. Then,
	\begin{align*}
		\sum_{t=1}^T &\Prob{\hpijst \le x_j, \opijsts > y_j, N_j(t) > L_j(T)} \\
		&\qquad= \sum_{t=1}^T \EE{\one{\hpijst \le x_j, \opijsts > y_j, N_j(t) > L_j(T)}} \\
		&\qquad = \EE{ \sum_{t=1}^T \EE{\one{\hpijst \le x_j, \opijsts > y_j, N_j(t) > L_j(T)}| \Ht}}.
		\intertext{Since $N_j(t)$ and $\hpijst$ are determined by the history $\Ht$,}
		&\qquad = \EE{ \sum_{t=1}^T \one{\hpijst \le x_j, N_j(t) > L_j(T)} \Prob{\opijsts > y_j| \Ht} }. \numberthis \label{eq:secondTerm}
	\end{align*}
	Now, by definition, $\mathcal{S}_{i^\star j}(t) = \hpijst N_j(t)$, and therefore, $\opijsts$ is a Beta$(\hpijst N_j(t) + 1, (1-\hpijst)N_j(t) + 1)$ distributed random variable. A Beta$(\alpha, \beta)$ random variable is stochastically dominated by Beta$(\alpha^\prime, \beta^\prime)$ if $\alpha^\prime \ge \alpha, \beta^\prime \le \beta$. Therefore, if $\hpijst \le x_j$, the distribution of $\opijsts$ is stochastically dominated by Beta$(x_j N_j(t) + 1, (1-x_j)N_j(t))$. Therefore, given a history $\Ht$ such that $\hpijst \le x_j$ and $N_j(t) > L_j(T)$, we have
	\begin{equation*}
		\Prob{\opijsts > y_j| \Ht} = 1 - F_{x_j N_j(t) + 1, (1-x_j)N_j(t)}^{beta}(y_j).
	\end{equation*}
	Now, using Beta-Binomial equality (\cref{fact:beta_binomial}), we obtain that for any fixed $N_j(t) > L_j(T)$,
	\begin{align*}
		1 - F_{x_j N_j(t) + 1, (1-x_j)N_j(t)}^{beta}(y_j) &= F_{N_j(t), y_j}^{B}(x_j N_j(t)) &\mbox{(using \cref{fact:beta_binomial})}
	\end{align*}
	Here $F_{N_j(t), y_j}^{B}(x_j N_j(t))$ is the cdf of Binomial distribution with parameter $y_j$ and $N_j(T)$ observations. Let $\mathcal{S}_t^\prime$ be the number of successes observed in $N_j(T)$ observations. Then,
	\begin{align*}
		1 - F_{x_j N_j(t) + 1, (1-x_j)N_j(t)}^{beta}(y_j) &= \Prob{\mathcal{S}_t^\prime \le x_j N_j(t)} \\
		&= \Prob{\frac{\mathcal{S}_t^\prime}{N_j(t)} \le x_j}\\
		&= \Prob{\hat{y}_j \le x_j} \hspace{17mm} \mbox{(using $\hat{y}_j = \mathcal{S}_t^\prime/N_j(t)$)}\\
		&\le \exp^{-N_j(t)d(x_j, y_j)} \hspace{10mm} \mbox{(using Chernoff-Hoeffding bound)}\\
		&\le \exp^{-L_j(t)d(x_j, y_j)}, \hspace{10mm} \mbox{(as $N_j(t) > L_j(T)$)}
	\end{align*}
	which is smaller than $1/T$ because $L_j(T) = \frac{\log(T)}{d(x_j, y_j)}$. Substituting, we get that for a history $\Ht$ such that $\hpijst \le x_j$ and $N_j(t) > L_j(T)$,
	\begin{align*}
		\Prob{\opijsts > y_j| \Ht} \le \frac{1}{T}.
	\end{align*}
	For other history $\Ht$, the indicator term $\one{\hpijst \le x_j, N_j(t) > L_j(T)}$ in \cref{eq:secondTerm} will be 0 as either event $\hpijst \le x_j$ or event $N_j(t) > L_j(T)$ is violated. Summing over $t$, this bounds the right hand side term in \cref{eq:secondTerm} as follows:
	\begin{align*}
		\sum_{t=1}^T \Prob{\hpijst \le x_j, \opijsts > y_j, N_j(t) > L_j(T)} &\le \EE{ \sum_{t=1}^T \frac{\one{\hpijst \le x_j, N_j(t) > L_j(T)}}{T} } \\
		&\le \EE{ \sum_{t=1}^T \frac{1}{T} } \\
		&= 1.
	\end{align*}
	Replacing the second term in \cref{eq:decomUB} by its upper bound and $L_j(T)$ with its value,
	\begin{equation*}
		\sum_{t=1}^T \Prob{\hpijst \le x_j, \opijsts > y_j} \le \frac{\ln T}{d(x_j, y_j)} + 1. \qedhere
	\end{equation*}
\end{proof}

\begin{restatable}{lem}{probHighPartOne}
	\label{lem:probHighPart1}
	For any $x_j > \pijs$,
	\begin{equation*}
		\sum_{t=1}^T \Prob{\hpijst > x_j} \le \frac{1}{d(x_j, \pijs)}.
	\end{equation*}
\end{restatable}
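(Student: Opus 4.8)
The plan is to recognize the left-hand side as a one-sided concentration statement for the empirical disagreement frequency and to reduce the sum over rounds to a geometric sum of Chernoff--Hoeffding tails. Write $\hat{p}_{i^\star j}^{(t)} = \tfrac{1}{N_j(t)}\sum_{k=1}^{N_j(t)} Z_k$, where $N_j(t)$ is the number of times arm $j$ has been observed before round $t$ and the $Z_k$ are the i.i.d.\ $\Bern(\pijs)$ disagreement indicators between arms $i^\star$ and $j$ (these are genuinely i.i.d.\ since $j>\ist$ means observing arm $j$ also reveals arm $i^\star$, and the environment draws are independent across rounds). The whole argument is: condition on the number of samples, apply the Chernoff--Hoeffding bound exactly as it was already used in the proof of \cref{lem:probHighPart2}, and sum the resulting exponentials.

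The key reduction step is the following. Because $\hat{p}_{i^\star j}^{(t)}$ is $\Ht$-measurable and its value (hence the indicator $\one{\hat{p}_{i^\star j}^{(t)} > x_j}$) changes only when a fresh observation of arm $j$ arrives, i.e.\ when $N_j(t)$ increments, I would rewrite the sum over rounds as a sum over the distinct observation counts $n = 1, 2, \ldots$. Concretely, letting $\hat{p}_{i^\star j, n} \doteq \tfrac{1}{n}\sum_{k=1}^{n} Z_k$ denote the empirical disagreement after exactly $n$ samples, each value of $n$ is attained by $N_j(t)$ at most once across the rounds counted, so that
\begin{equation*}
	\sum_{t=1}^T \Prob{\hat{p}_{i^\star j}^{(t)} > x_j} \le \sum_{n=1}^{\infty} \Prob{\hat{p}_{i^\star j, n} > x_j}.
\end{equation*}

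With the reduction in hand the rest is mechanical. Since $x_j > \pijs$, the Chernoff--Hoeffding bound gives $\Prob{\hat{p}_{i^\star j, n} > x_j} \le \exp^{-n\, d(x_j, \pijs)}$ for each $n$, where $d(\cdot,\cdot)$ is the Bernoulli KL appearing in \cref{lem:betaExpBound}. Summing the geometric series,
\begin{equation*}
	\sum_{n=1}^{\infty} \exp^{-n\, d(x_j, \pijs)} = \frac{1}{\exp^{d(x_j,\pijs)} - 1} \le \frac{1}{d(x_j, \pijs)},
\end{equation*}
where the last inequality uses $e^{u} - 1 \ge u$ for all $u \ge 0$. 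Chaining the two displays yields the claimed bound, and notably the geometric sum starting at $n=1$ lands exactly on $1/d(x_j,\pijs)$ with no additive constant, matching the statement.

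I expect the reduction in the second paragraph to be the only delicate point: the raw sum $\sum_{t}\Prob{\hat{p}_{i^\star j}^{(t)} > x_j}$ counts every round, and in rounds where arm $j$ is not observed the empirical estimate is frozen, so one must be careful to charge each Chernoff tail to a distinct observation index $n$ rather than to each round. The cleanest way to make this rigorous is to pass to conditional expectations as in \cref{lem:probHighPart2} (writing the sum as $\EE{\sum_t \one{\cdot}}$ and using $\Ht$-measurability of $N_j(t)$ and $\hat{p}_{i^\star j}^{(t)}$), so that the contribution is controlled by the value of the estimate at each successive observation count rather than at each calendar round.
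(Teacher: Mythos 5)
Your proposal is correct and takes essentially the same route as the paper's own proof: the paper likewise re-indexes the calendar-time sum by the times $s_m$ at which the pair $(i^\star,j)$ is observed, applies the Chernoff--Hoeffding bound (\cref{fact:chernoff}), and bounds the resulting geometric series by $1/d(x_j,\pijs)$. Your bookkeeping is if anything slightly tidier --- starting the geometric sum at $n=1$ so that $\sum_{n\ge 1} e^{-n\,d(x_j,\pijs)} = \bigl(e^{d(x_j,\pijs)}-1\bigr)^{-1} \le 1/d(x_j,\pijs)$ with no stray $n=0$ term --- and the frozen-estimate subtlety you flag (charging each tail to a distinct observation count rather than to each round) is present, and dispatched with the same brevity, in the paper's proof as well.
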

\begin{proof}\textbf{(sketch)}
	This result is easily proved by using Chernoff-Hoeffding bound. See details in the supplementary material.
\end{proof}

\begin{restatable}{lem}{probHigh}
	\label{lem:probHigh}
	Let $P \in \PWD$. For any $\epsilon > 0$ and $j > i^\star$,
	\begin{equation*}
		\sum_{t=1}^T\Prob{j \succ_t i^\star, j> \ist} \le (1+\epsilon)\frac{\ln T}{d(\pijs, \pijs + \xi_j)} + O\left(\frac{1}{\epsilon^2}\right).
	\end{equation*}
\end{restatable}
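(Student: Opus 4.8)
The plan is to reduce the preference event to a one-sided overestimation event for the Beta sample $\opijsts$, then apply the two preceding lemmas with a carefully tuned intermediate threshold. First I would translate the event. Since $j>\ist$, \cref{def:preference} (the case $\eqref{def_prefer_l}$ with the preferred arm taken to be $j$ and the compared arm $\ist<j$) gives the equivalence $j \succ_t \ist \iff \opijsts \ge C_j - C_{\ist}$. By the definition of $\xi_j$ for $j>\ist$ in $\eqref{def_xi_h}$ we have $C_j - C_{\ist} = \pijs + \xi_j$, so
\[
\Prob{j \succ_t \ist,\ j>\ist} = \Prob{\opijsts \ge \pijs + \xi_j}.
\]
Setting $y_j \doteq \pijs + \xi_j$, note that $y_j > \pijs$ because $\xi_j>0$ under $\WD$, so this is genuinely an overestimation probability for the sample of a quantity whose true value is $\pijs$.

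Next I would introduce an intermediate threshold $x_j$ with $\pijs < x_j < y_j$ and split according to whether the empirical estimate $\hpijst$ clears $x_j$:
\[
\Prob{\opijsts \ge y_j} \le \Prob{\hpijst > x_j} + \Prob{\hpijst \le x_j,\ \opijsts > y_j}.
\]
Summing over $t$, the first term is controlled by \cref{lem:probHighPart1} and the second by \cref{lem:probHighPart2}, yielding
\[
\sum_{t=1}^T \Prob{\opijsts \ge y_j} \le \frac{1}{d(x_j, \pijs)} + \frac{\ln T}{d(x_j, y_j)} + 1.
\]

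It then remains to choose $x_j$ to balance the two divergence terms. Because $x \mapsto d(x, y_j)$ is continuous and strictly decreasing on $(\pijs, y_j)$, for every $\epsilon>0$ there is $\delta>0$ so that taking $x_j = \pijs + \delta$ forces $d(x_j, y_j) \ge \tfrac{1}{1+\epsilon}\, d(\pijs, y_j)$, hence $\frac{\ln T}{d(x_j, y_j)} \le (1+\epsilon)\frac{\ln T}{d(\pijs, \pijs + \xi_j)}$. The price of pushing $x_j$ toward $\pijs$ appears in the first term: the binary KL divergence vanishes quadratically at equality, so $d(\pijs+\delta, \pijs) = \Theta(\delta^2)$ and $\frac{1}{d(x_j, \pijs)} = \Theta(1/\delta^2)$. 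A linearization of $d(\cdot, y_j)$ at $x=\pijs$ shows that $\delta = \Theta(\epsilon)$ suffices to secure the $(1+\epsilon)$ factor, whence $\frac{1}{d(x_j,\pijs)} + 1 = O(1/\epsilon^2)$. Combining the two estimates gives the claimed bound.

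The main obstacle I anticipate is exactly this final tuning step: one must exhibit a single $x_j$ that simultaneously keeps $d(x_j, y_j)$ within a $(1+\epsilon)$ factor of $d(\pijs, y_j)$ while keeping $1/d(x_j, \pijs)$ at order $1/\epsilon^2$. This rests on the local geometry of the binary KL divergence—linear decay of $d(\cdot, y_j)$ as $x$ moves away from $\pijs$ versus quadratic growth of $d(\pijs+\delta, \pijs)$—and it is precisely the scaling $\delta = \Theta(\epsilon)$ that reconciles the two competing requirements and produces the advertised $O(1/\epsilon^2)$ additive constant.
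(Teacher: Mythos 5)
Your proposal is correct and takes essentially the same route as the paper's proof: the same reduction of $j \succ_t \ist$ to the overestimation event for $\opijsts$, the same split according to whether $\hpijst$ exceeds an intermediate threshold $x_j$ (handled by \cref{lem:probHighPart1} and \cref{lem:probHighPart2}), and the same KL-geometry tuning (Pinsker plus the linear-vs-quadratic behavior of $d(\cdot,\cdot)$, which is exactly the paper's \cref{fact:diffKL}) giving $x_j - \pijs = \Theta(\epsilon)$ and the $O(1/\epsilon^2)$ remainder. The only cosmetic difference is that you fix $y_j = \pijs + \xi_j$ and tune a single threshold $x_j$, which yields the factor $(1+\epsilon)$ directly, whereas the paper tunes both $x_j$ and $y_j$, obtains $(1+\epsilon)^2$, and then renames $\epsilon^\prime = 3\epsilon$.
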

\begin{proof}\textbf{(sketch)}
	Let $\pijs < x_j < y_j < \pijs + \xi_j$ where $j>\ist$. Then, it can be easily shown that $\sum_{t=1}^T\Prob{j \succ_t i^\star, j > \ist} \le  \sum_{t=1}^T \Prob{\hpijst \le x_j, \opijsts > y_j} + \sum_{t=1}^T \Prob{\hpijst > x_j}.$ The upper bound on first term of right hand side quantity is given by \cref{lem:probHighPart2} and the upper bound of the second term of right hand side quantity is given by \cref{lem:probHighPart1}.  Then, for $\epsilon \in (0,1)$ with suitable values of $x_j$ and $y_j$, we can get the above stated upper bound.
\end{proof}

Let $\Delta_j = C_j + \gamma_j - (C_{i^\star} + \gamma_{i^\star})$ be the sub-optimality gap for arm $j$. Now we state the problem dependent regret upper bound of \ref{alg:TS_USS}.
\begin{restatable}[Problem Dependent Bound]{thm}{depRegretBound}
	\label{thm:depRegretBound}
	Let $P \in \PWD$ and satisfies the transitivity property. If $\epsilon > 0$ then, the expected regret of \ref{alg:TS_USS} in $T$ rounds is bounded by
	\begin{equation*}
		{\Regret_T} \le \sum_{j > i^\star} \frac{(1+\epsilon)\ln T}{d(\pijs, \pijs + \xi_j)}\Delta_j + O\left(\frac{K-i^\star}{\epsilon^2}\right),
	\end{equation*}
\end{restatable}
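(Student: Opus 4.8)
The plan is to decompose the regret by arm and reduce each per-arm selection count to quantities already controlled by \cref{lem:probHigh} and \cref{lem:probLow}. Writing $N_j = \sum_{t=1}^T \one{I_t = j}$ for the number of rounds in which arm $j$ is selected and recalling $\Delta_j = C_j + \gamma_j - (C_{\ist} + \gamma_{\ist})$, linearity of expectation turns \cref{eq:cum_regret} into
\begin{equation*}
    \EE{\Regret_T} = \sum_{j \neq \ist} \Delta_j \EE{N_j} = \sum_{j < \ist} \Delta_j \sum_{t=1}^T \Prob{I_t = j} + \sum_{j > \ist} \Delta_j \sum_{t=1}^T \Prob{I_t = j},
\end{equation*}
since $\Delta_{\ist} = 0$. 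It then suffices to bound the two arm-count sums separately.

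First I would dispose of the arms below the optimal one. For each $j < \ist$, \cref{lem:probLow} bounds $\sum_{t=1}^T \Prob{I_t = j, j < \ist}$ by $24/\xi_j^2$ plus a series whose terms decay exponentially in $s$ and hence converges, so $\EE{N_j} = O(1/\xi_j^2)$ is a constant independent of $T$. Consequently $\sum_{j < \ist}\Delta_j \EE{N_j}$ is an $O(1)$ quantity that can be absorbed into the additive remainder.

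The crux is the arms above $\ist$, and here the transitivity hypothesis does the real work. The observation I would isolate is that, under \cref{def:trans_prop}, the relation $\succ_t$ is a strict total order on $[K]$: for every pair exactly one of $i \succ_t j$, $j \succ_t i$ holds by \cref{def:preference}, and transitivity supplies the rest. Consequently the selected arm $I_t = \min(\Bt)$, with $\Bt = \{i : i \succ_t k \ \forall k > i\}\cup\{K\}$ as in \cref{lem:Bx}, is precisely the $\succ_t$-maximal arm: the maximal element beats every larger index and so lies in $\Bt$, while any smaller index is beaten by it and therefore violates the defining condition of $\Bt$. In particular, for $j > \ist$ the event $\{I_t = j\}$ forces $j \succ_t k$ for all $k \neq j$, so that $\{I_t = j,\, j>\ist\} \subseteq \{j \succ_t \ist,\, j > \ist\}$. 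Hence $\sum_{t=1}^T \Prob{I_t = j} \le \sum_{t=1}^T \Prob{j \succ_t \ist, j > \ist}$, which \cref{lem:probHigh} bounds by $(1+\epsilon)\ln T / d(\pijs, \pijs + \xi_j) + O(1/\epsilon^2)$.

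Combining the two pieces, the dominant term becomes $\sum_{j>\ist}\frac{(1+\epsilon)\ln T}{d(\pijs,\pijs+\xi_j)}\Delta_j$; the leftover contributions are $\Delta_j \cdot O(1/\epsilon^2)$ summed over the $K-\ist$ arms above $\ist$, together with the $O(1)$ term from the arms below $\ist$. Since each gap $\Delta_j$ is bounded by a constant (the costs and losses being bounded), these collapse into $O((K-\ist)/\epsilon^2)$, yielding the stated inequality. I expect the only genuinely delicate point to be the total-order argument of the third paragraph — verifying that $I_t$ really is the $\succ_t$-maximal arm, and hence that overshooting $\ist$ to reach some $j>\ist$ entails $j \succ_t \ist$; once that containment is in hand, everything else is bookkeeping over the two cited lemmas.
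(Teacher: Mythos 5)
Your proposal is correct and follows essentially the same route as the paper's proof: the same decomposition of regret over $j<\ist$ and $j>\ist$, with \cref{lem:probLow} disposing of the lower arms as a $T$-independent constant and the containment $\{I_t=j,\,j>\ist\}\subseteq\{j\succ_t\ist,\,j>\ist\}$ feeding into \cref{lem:probHigh}. Your total-order argument (that $\succ_t$ is a tournament made transitive by hypothesis, so $I_t=\min(\cB_t)$ is the $\succ_t$-maximal arm) is just a cleaner packaging of the paper's recursive chaining argument $k_1,k_2,\ldots,k'$, and the one quantitative slip --- the $j<\ist$ contribution is $O(1/\xi_j^4)$, not $O(1/\xi_j^2)$, because of the term $\sum_s e^{-s\,d(\pijs-\xi_j,\pijs)}/((s+1)\xi_j^2)$ --- is immaterial since either way it is constant in $T$ and absorbed into the remainder.
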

\begin{proof}\textbf{(sketch)}
	Let $M_j(T)$ is the number of times arm $j$ is selected by \ref{alg:TS_USS}. Then, the regret of \ref{alg:TS_USS} is given by ${\Regret_T} = \sum_{j \in [K]}\EE{M_j(T)}\Delta_j = \sum_{j \in [K]}\sum_{t=1}^{T}\EE{\one{I_t = j}}\Delta_j = \sum_{j \in [K]}\sum_{t=1}^{T}\Prob{I_t = j}\Delta_j$. We divide the regret into two parts and it can be re-written as ${\Regret_T} \le \sum_{j < i^\star} \sum_{t=1}^{T} \Prob{I_t=j, j < i^\star} \Delta_j + \sum_{j > i^\star}\sum_{t=1}^{T}\Prob{I_t=j, j > i^\star} \Delta_j$. The first part of the regret is upper bounded by using  \cref{lem:probLow}. For the second part, when arm $I_t > \ist$  is selected, then there exists at least one arm $k > \ist$, which must be preferred over $\ist$. Using transitivity property and a recursive argument, we can show that the selected arm is preferred over the optimal arm.  Hence, $\sum_{j > i^\star}\sum_{t=1}^{T}\Prob{I_t=j, j > i^\star} \Delta_j$ can be upper bounded by $\sum_{j > i^\star} \sum_{t=1}^{T}$ $\Prob{j \succ_t \ist, j > i^\star} \Delta_j$. We can upper bound $\sum_{j > i^\star} \sum_{t=1}^{T}\Prob{j \succ_t \ist, j > i^\star} \Delta_j$ by using \cref{lem:probHigh} to get the above stated regret upper bound for \ref{alg:TS_USS}.
\end{proof}

\noindent
Next we present problem independent bounds on the regret of \ref{alg:TS_USS}.
\begin{restatable}[Problem Independent Bound]{thm}{indepRegretBound}
	\label{thm:indepRegretBound}
	Let $P \in \PWD$ and satisfies the transitivity property. Then the expected regret of \ref{alg:TS_USS} in $T$ rounds 
	\begin{itemize}
		\item for any instance in $\PSD$ is bounded as
		\begin{align*}
			{\Regret_T} \le O\left( \sqrt{KT\ln T}\right).
		\end{align*}
		\item 	for any instance in $\PWD$ is bounded as
		\begin{align*}
			{\Regret_T} \le O\left( \left(K\ln T\right)^{1/3}T^{2/3}\right).
		\end{align*}
	\end{itemize}
\end{restatable}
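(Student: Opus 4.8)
The plan is to derive both problem-independent bounds from the problem-dependent bound of \cref{thm:depRegretBound} by worst-casing over the instance gaps $\{\xi_j\}_{j>\ist}$ with a standard threshold (peeling) argument. The two preliminary facts I would record are: (i) Pinsker's inequality $d(\pijs,\pijs+\xi_j)\ge 2\xi_j^2$, which turns each logarithmic term into at most $\tfrac{(1+\epsilon)\Delta_j\ln T}{2\xi_j^2}$; and (ii) the trivial budget identity $\sum_{j\in[K]}\EE{M_j(T)}=T$, where $M_j(T)$ is the number of times arm $j$ is selected and its regret contribution is $R_j=\Delta_j\,\EE{M_j(T)}$. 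The whole argument then reduces to choosing a threshold $\delta$, bounding the ``large-gap'' arms through the logarithmic term and the ``small-gap'' arms through the budget, and optimizing $\delta$.

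For the $\PSD$ bound the crucial simplification is that strong dominance forces $\Prob{Y^\ist = Y,\ Y^j\ne Y}=0$ for every $j>\ist$, so \cref{lem:err_prob_contx} collapses the regret gap onto the information gap: $\Delta_j=(C_j-C_\ist)-\pijs=\xi_j$. Hence each large-gap arm contributes $\tfrac{(1+\epsilon)\ln T}{2\xi_j}\le\tfrac{(1+\epsilon)\ln T}{2\delta}$ (at most $\tfrac{K(1+\epsilon)\ln T}{2\delta}$ in total), while for the arms with $\xi_j<\delta$ I would write $R_j=\xi_j\,\EE{M_j(T)}<\delta\,\EE{M_j(T)}$ and sum against the budget to get a joint contribution of at most $\delta T$. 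Balancing $\tfrac{K\ln T}{\delta}+\delta T$ at $\delta=\sqrt{K\ln T/T}$ (with $\epsilon$ held constant and the additive $O(K/\epsilon^2)$ absorbed) yields $\Regret_T\le O(\sqrt{KT\ln T})$.

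For the $\PWD$ bound the same peeling applies, but now \cref{lem:err_prob_contx} gives $\Delta_j=\xi_j+2\Prob{Y^\ist = Y,\ Y^j\ne Y}$, and the extra term need not vanish as $\xi_j\to 0$. Using the crude bound $\Delta_j\le 1$, the large-gap arms ($\xi_j\ge\delta$) now each contribute $\tfrac{(1+\epsilon)\ln T}{2\xi_j^2}\le\tfrac{(1+\epsilon)\ln T}{2\delta^2}$ (total $\tfrac{K(1+\epsilon)\ln T}{2\delta^2}$), the extra factor $1/\delta$ relative to the $\PSD$ case reflecting exactly the lost cancellation of $\xi_j$ against $\Delta_j$. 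Charging the remaining selections against the budget to produce a term of order $\delta T$ and balancing $\tfrac{K\ln T}{\delta^2}+\delta T$ at $\delta=(K\ln T/T)^{1/3}$ gives $\Regret_T\le O\big((K\ln T)^{1/3}T^{2/3}\big)$.

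The main obstacle is precisely this small-gap regime under $\PWD$. In the $\PSD$ case the identity $\Delta_j=\xi_j$ lets the threshold $\delta$ be passed directly onto the per-round regret, so the small-gap arms cost only $\delta$ apiece and the budget bound cleanly yields $\delta T$. Under $\PWD$, however, an arm may have a constant regret gap $\Delta_j$ while its information gap $\xi_j$ is arbitrarily small, so the count bound $\EE{M_j(T)}\le\tfrac{\ln T}{2\xi_j^2}$ becomes vacuous precisely for the arms that are most expensive per selection. The delicate step is therefore to partition so that the budget charge still produces $\delta T$ rather than a linear-in-$T$ term — most naturally by thresholding on $\Delta_j$ for the budget part while keeping the logarithmic control in $\xi_j$ for the rest — and it is exactly this decoupling of the regret gap from the information gap that degrades the attainable rate from $T^{1/2}$ under $\SD$ to $T^{2/3}$ under $\WD$.
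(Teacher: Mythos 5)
Your skeleton matches the paper's: peel the arms at a gap threshold, use Pinsker's inequality, exploit $\Delta_j=\xi_j$ under $\SD$ versus $\Delta_j\le\xi_j+\beta$ under $\WD$, and balance the threshold to get $T^{1/2}$ and $T^{2/3}$ respectively. But your starting point is unsound for a \emph{problem-independent} statement: you cannot worst-case \cref{thm:depRegretBound} over instances, because its additive term $O\left((K-\ist)/\epsilon^2\right)$ is not an absolute constant. The big-Oh in \cref{lem:probHigh} explicitly hides factors depending on $\pijs$ and $\xi_j$, and the arms $j<\ist$ are absorbed there into a term of order $1/\xi_j^4$ (see \cref{equ:lowerRegret}); both blow up exactly as $\xi_j\to 0$, which is the regime your peeling is supposed to control. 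The paper accordingly does not invoke \cref{thm:depRegretBound} at all. It re-derives per-arm count bounds with explicit, uniform gap dependence: for $j<\ist$ it resums \cref{lem:probLow}, truncating $\sum_s 1/\left((s+1)\xi_j^2\right)$ at $T$, to get $\EE{M_j(T)}\le O\left(\ln T/\xi_j^2\right)$ (\cref{eq:lowIndRegret}), and for $j>\ist$ it applies \cref{lem:probHighPart1,lem:probHighPart2} with the fixed choices $x_j=\pijs+\xi_j/3$, $y_j=\pijs+2\xi_j/3$ (rather than the $\epsilon$-optimized ones) and Pinsker to get the same $O\left(\ln T/\xi_j^2\right)$ (\cref{eq:highIndRegret}). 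Relatedly, your $\PSD$ argument discusses only $j>\ist$; the arms $j<\ist$ need the same count bound (under $\SD$ one also has $\Delta_j=\xi_j$ for $j<\ist$), and it is available only through \cref{lem:probLow}, not through the problem-dependent theorem.

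The second gap is the one you yourself flagged, and your proposed fix does not close it. Thresholding the budget on $\Delta_j$ while keeping logarithmic control in $\xi_j$ fails because, for $j>\ist$, $\Delta_j=\xi_j+2\Prob{Y^{\ist}=Y,\ Y^j\ne Y}$ can be of constant order while $\xi_j$ is arbitrarily small: such an arm is excluded from your budget bucket (since $\Delta_j\ge\delta$) while the bound $\EE{M_j(T)}\Delta_j\le O\left(\ln T/\xi_j^2\right)\Delta_j$ is vacuous, so it lands in neither part. The paper's bookkeeping thresholds on $\xi_j$ throughout: it substitutes $\Delta_j\le\xi_j$ for $j<\ist$ and $\Delta_j\le\xi_j+\beta$ with $\beta\le 2$ for $j>\ist$, pays the $\beta$ inside the \emph{large}-gap term via $\EE{M_j(T)}(\xi_j+\beta)\le O\left(\ln T\left(1/\xi' + \beta/{\xi'}^2\right)\right)\le O\left(\ln T\cdot 3/{\xi'}^2\right)$, charges the small-gap arms $T\xi'$, and balances $\xi'=\Theta\left((K\ln T/T)^{1/3}\right)$; the $1/{\xi'}^2$ (versus $1/\xi'$ under $\SD$, where $\beta=0$ and $\xi'=\Theta(\sqrt{K\ln T/T}))$ is exactly what degrades the rate to $T^{2/3}$, confirming your intuition about the mechanism. (One caveat worth knowing: even the paper's small-gap charge of $T\xi'$ carries only the $\xi_j$-part of the weight for $j>\ist$, so the regime you identified is genuinely the delicate point of this theorem; but the paper's official resolution is the $\Delta_j\le\xi_j+\beta$ substitution with the $\xi_j$-threshold, not a $\Delta_j$-threshold.) As written, your proof of the $T^{2/3}$ bound has a hole at precisely the step you called delicate.
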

\begin{proof}\textbf{(sketch)}
	To get the above problem independent regret upper bound, we maximize the problem-dependent regret of \ref{alg:TS_USS} with respect to the value of $\xi_j$. 
\end{proof}

\begin{cor}
	Let $P \in \PWD$ and satisfies the transitivity property. Then the expected regret of \ref{alg:TS_USS} on $\PSD$ is $\tilde{O}(T^{1/2})$ and on $\PWD$ it is $\tilde{O}(T^{2/3})$, where $\tilde{O}$ hides $K$ and the logarithmic terms that are having $T$ in them.
\end{cor}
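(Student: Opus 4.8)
The plan is to derive both problem-independent rates from the problem-dependent bound of \cref{thm:depRegretBound} by eliminating the instance-specific quantities $\pijs$ and $\xi_j$ through a worst-case argument. The starting point is the per-arm estimate implicit in that theorem, namely that the expected number of rounds in which a sub-optimal arm $j>\ist$ is selected is at most $\frac{(1+\epsilon)\ln T}{d(\pijs,\pijs+\xi_j)}+O(1/\epsilon^2)$, so that arm $j$ contributes at most $\frac{(1+\epsilon)\ln T}{d(\pijs,\pijs+\xi_j)}\Delta_j$ to the regret. First I would replace the $\kl$ term by its Pinsker lower bound $d(\pijs,\pijs+\xi_j)\ge 2\xi_j^2$, turning the per-arm contribution into $\frac{(1+\epsilon)\Delta_j\ln T}{2\xi_j^2}$, and I would keep in reserve the trivial fact that any arm is selected at most $T$ times, so its contribution is also at most $\Delta_j T$.

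For the $\PSD$ case I would exploit that strong dominance forces $\Prob{\Ytis=\Yt,\,\Ytj\ne\Yt}=0$ for $j>\ist$, so by \cref{lem:err_prob_contx} the sub-optimality gap collapses to the identity $\Delta_j=\xi_j$. Fixing a threshold $\xi_0$, I would split the arms $j>\ist$ into well-separated arms with $\xi_j\ge\xi_0$, each contributing at most $\frac{(1+\epsilon)\ln T}{2\xi_j}\le\frac{(1+\epsilon)\ln T}{2\xi_0}$ through the logarithmic bound, and hard arms with $\xi_j<\xi_0$, whose total contribution I would bound by $\sum_j\xi_j M_j(T)<\xi_0\sum_j M_j(T)\le \xi_0 T$ using the global pull budget $\sum_j M_j(T)\le T$ inherent in the cascade. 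Summing the first group over at most $K$ arms gives $\frac{K(1+\epsilon)\ln T}{2\xi_0}+\xi_0 T$, and optimising $\xi_0=\sqrt{K\ln T/T}$ yields the claimed $O(\sqrt{KT\ln T})$.

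For the general $\PWD$ case the gap $\Delta_j$ no longer shrinks with $\xi_j$ but remains bounded by a constant $\Delta_{\max}$, so I would instead balance the two estimates per arm directly. Applying the interpolation inequality $\min(a,b)\le a^{1/3}b^{2/3}$ to $\min\!\big(\frac{(1+\epsilon)\Delta_j\ln T}{2\xi_j^2},\,\Delta_j T\big)$ (equivalently, maximising the per-arm contribution over $\xi_j$ and balancing it against the $\Delta_j T$ cap) produces a per-arm contribution of order $(\ln T)^{1/3}T^{2/3}$; summing over the at most $K$ arms with index larger than $\ist$ gives the stated $O((K\ln T)^{1/3}T^{2/3})$. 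In both cases the contribution of the arms $j<\ist$ is controlled by \cref{lem:probLow}, which is independent of $T$ and hence absorbed as a lower-order term, as is the additive $O((K-\ist)/\epsilon^2)$ coming from \cref{thm:depRegretBound}.

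The hard part will be the small-$\xi_j$ regime, where the logarithmic estimate from \cref{lem:probHigh} degenerates and cannot by itself yield a sublinear bound; the crux of the argument is therefore to combine it with the trivial $T$-cap and to choose the threshold (or the interpolation exponent) so that the two regimes balance cleanly. A secondary but essential point is that the improvement from a naive $K$-fold sum to the $\sqrt{K}$ (respectively $K^{1/3}$) dependence hinges on using the cascade pull budget $\sum_j M_j(T)\le T$ rather than bounding each arm in isolation, together with the fact that under $\SD$ the identity $\Delta_j=\xi_j$ makes the hard-arm contribution automatically small. I would therefore carry out the threshold split and the budget bookkeeping carefully, and only at the end substitute the optimising value of $\xi_0$ (respectively the optimising exponent split) to read off the two rates.
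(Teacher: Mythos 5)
Your $\PSD$ half is essentially the paper's own proof of \cref{thm:indepRegretBound} in the special case $\beta=0$: under SD, \cref{lem:err_prob_contx} gives $\pijs=\gamma_\ist-\gamma_j$ for $j>\ist$, hence $\Delta_j=\xi_j$ exactly, and your threshold split at $\xi_0$ with the pull budget $\sum_j M_j(T)\le T$ and the choice $\xi_0=\sqrt{K\ln T/T}$ matches the paper's optimization of $\xi'$. The $\PWD$ half, however, has a genuine gap. With $a=\frac{(1+\epsilon)\Delta_j\ln T}{2\xi_j^2}$ and $b=\Delta_j T$, the interpolation $\min(a,b)\le a^{1/3}b^{2/3}$ yields $\Delta_j\big(\tfrac{(1+\epsilon)\ln T}{2}\big)^{1/3}T^{2/3}\,\xi_j^{-2/3}$: the factor $\xi_j^{-2/3}$ does \emph{not} cancel, and no choice of interpolation exponent removes it, because the cap $b$ carries no $\xi_j$-dependence. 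Under WD alone, $\Delta_j=\xi_j+2\Prob{\Yis=Y,\,\Yj\ne Y}$ can stay $\Theta(1)$ while $\xi_j\to 0$ (take $\xi_j=T^{-1/2}$), in which case $\min(a,b)=\Delta_j T$ — your per-arm contribution is linear in $T$, not $(\ln T)^{1/3}T^{2/3}$. The paper's route is structurally different: it writes $\Delta_j\le\xi_j+\beta$ with $\beta\le 2$, splits arms at a \emph{global} threshold $\xi'$, bounds the small-$\xi_j$ group via the pull budget by $T\xi'$ and the large group by $O\big(K\ln T\,(1/\xi'+\beta/{\xi'}^2)\big)$, then optimizes $\xi'=\Theta\big((K\ln T/T)^{1/3}\big)$; it is precisely the $\beta/{\xi'}^2$ term — which your accounting never produces — that generates the $T^{2/3}$ exponent.

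A second, smaller gap concerns the arms $j<\ist$. You dismiss them because the bound of \cref{lem:probLow} is "independent of $T$," but a problem-independent statement must be uniform over the class, and that bound scales like $\Theta(1/\xi_j^4)$ (via \cref{fact:expUB} with $D=\xi_j^2/4$ and Pinsker on $d(\pijs-\xi_j,\pijs)$). With $\Delta_j\le\xi_j$ and the cap $\xi_j T$, the worst case $\min\big(\xi_j T,\ \xi_j\cdot\Theta(1/\xi_j^4)\big)$ reaches $\Theta(T^{3/4})$ at $\xi_j=T^{-1/4}$, which already exceeds the claimed $\tilde O(T^{1/2})$ on $\PSD$. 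The paper handles this by re-deriving, for $j<\ist$, the coarser but $T$-uniform bound $\EE{M_j(T)}\le O(\ln T/\xi_j^2)$ (bounding $\exp(-sd)\le 1$ and $\sum_s 1/(s+1)\le\ln T$ rather than keeping the exponential decay) and feeding these arms through the same threshold split as the $j>\ist$ arms. Both gaps are repairable, but the repair for $\PWD$ is essentially to abandon the per-arm interpolation in favor of the paper's global threshold-plus-$\beta$ bookkeeping.
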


\noindent
\paragraph{Discussion on optimality of \ref{alg:TS_USS}:} Stochastic partial monitoring problems can be classified as an `easy,' `hard,' or `hopeless' problem with expected regret bounds of the order $\Theta(T^{1/2}), \Theta(T^{2/3})$, or $\Theta(T)$, respectively. And there exists no other class of problems in between \citep{MOR14_bartok2014partial}. The class $\PSD$ is {regret equivalent} to a stochastic multi-armed bandit with side observations \citep{AISTATS17_hanawal2017unsupervised},  for which regret scales as $\Theta(T^{1/2})$, hence $\PSD$ resides in the easy class and our bound on it is near-optimal.  Since $\PWD \supsetneq \PSD$, $\PWD$ is not easy problem. Since $\PWD$ is also learnable, it cannot be a hopeless problem. Therefore, the class $\PWD$ is hard. We thus conclude that the regret bound of \ref{alg:TS_USS} is also near-optimal in $T$ up to a logarithmic term.

	\section{Experiments}
	\label{sec:experiments}

We evaluate the performance of \ref{alg:TS_USS} on different problem instances derived from synthetic and two real datasets: PIMA Indians Diabetes \citep{UCI16_pima2016kaggale} and Heart Disease (Cleveland) \citep{HEART98_robert1988va}. The details of the used problem instances are given as follows.

\paragraph{Synthetic Dataset:} 
We generate synthetic Bernoulli Symmetric Channel (BSC) dataset \citep{AISTATS17_hanawal2017unsupervised} as follows: The true binary feedback $Y_t$ is generated from i.i.d. Bernoulli random variable with mean $0.7$. The problem instance used in the experiment has three arms. We fix feedback as true binary feedback for the first arm with probability $0.6$, second arm with probability $0.7$, and third arm with probability $0.8$. To ensure strong dominance, we impose the condition during data generation. When the feedback of arm $1$ matches the true binary feedback, we introduce error up to 10\%  to the feedback of arm $2$ and $3$. We use five problem instances of the BSC dataset by varying the cumulative cost of playing the arms as given in \cref{table:bsc}.  

\begin{table}[H]
	\centering
	\scriptsize
	\setlength\tabcolsep{10pt}
	\setlength\extrarowheight{3pt}
	\begin{tabular}{ |c|c|c|c|c|} 
		\hline
		\multirow{1}{*}{\bf Values/ \newline Arms}&Arm $1$ &Arm $2$&Arm $3$& \multirow{1}{*}{\parbox{1cm}{~\\ $\WD$ Property}}\\ 
		\cline{1-4}
		Error-rate $(\gamma_i)$ & 0.3937 & 0.2899 & 0.1358 &  \multicolumn{1}{c|}{~}  \\
		\hline
		Instance 1 Costs& \textcolor{red}{\textbf{0.05}} & 0.285          & 0.45        &  \multicolumn{1}{c|}{\checkmark}  \\ 
		\hline
		Instance 2 Costs& 0.05         & \textcolor{red}{\textbf{0.1}} & 0.53      & \multicolumn{1}{c|}{\checkmark} \\ 
		\hline
		Instance 3 Costs& \textcolor{red}{\textbf{0.05  }} & 0.3           & 0.45      &    \multicolumn{1}{c|}{\checkmark}  \\ 
		\hline
		Instance 4 Costs& 0.05         & 0.25         & \textcolor{red}{\textbf{0.29}} &  \multicolumn{1}{c|}{\checkmark}\\ 
		\hline
		Instance 5 Costs& 0.1        & \textcolor{red}{\textbf{0.2}} & 0.41       &  \multicolumn{1}{c|}{\ding{53}} \\
		\hline
	\end{tabular}
	\caption{\small $\WD$ propoerty doesn't hold for Instance 5. Optimal arm's cost is in \textcolor{red}{\bf red bold} font.}
	\label{table:bsc}
\end{table}

\paragraph{Real Datasets:} An arm $i$ represents a classifier whose prediction is treated as the feedback of the arm $i$. The disagreement label for $(i,j)$ pair is computed using the labels of classifier (Clf.) $i$ and $j$. In Heart Disease dataset, each sample has $12$ features. We split the features into three subsets and train a logistic classifier on each subset. We associate 1st classifier with the first $6$ features as input, including cholesterol readings, blood sugar, and rest-ECG. The 2nd classifier, in addition to the $6$ features, utilizes the thalach, exang and oldpeak features, and the 3rd classifier uses all the features. In PIMA Indians Diabetes dataset, each sample has $8$ features related to the conditions of the patient. We split the features into three subsets and train a logistic classifier on each subset. We associate 1st classifier with the first $6$ features as input. These features include patient profile. The 2nd classifier, in addition to the $6$ features, utilizes the feature on the glucose tolerance test, and the 3rd classifier uses all the previous features and the feature that gives values of insulin test. The PIMA Indians Diabetes dataset has $768$ samples, whereas the Heart Disease dataset has only $297$ samples. As $10000$ rounds are used in our experiments, we select a sample from the original dataset in a round-robin fashion and give it as input to the algorithm. The details about the different costs used in five problem instances of the real datasets are given in \cref{table:real_datasets}.

\begin{table}[!h]
	\centering
	\scriptsize
	\setlength\tabcolsep{10pt}
	\setlength\extrarowheight{3pt}
	\begin{tabular}{|c|c|c|c|c|c|c|c|}
		\hline
		\multirow{2}{*}{\parbox{2.7cm}{\bf Values/ Classifiers (Arms)}} &\multicolumn{3}{|c|}{\bf PIMA Indians Diabetes}&\multicolumn{3}{|c|}{\bf Heart Disease} &\multirow{3}{*}{\parbox{1cm}{$\WD$ Property}}\\ 
		\cline{2-7} 
		&Clf. 1 &Clf. 2&Clf. 3&Clf. 1 &Clf. 2&Clf. 3&\\ \cline{1-7}
		Error-rate ($\gamma_{i}$) & 0.3098 &0.233&0.2278&0.2929 &0.2025& 0.1483 &\\ 
		\hline
		Instance  1 Costs& \textcolor{red}{\textbf{0.05}}& 0.28& 0.45&\textcolor{red}{\textbf{0.02}}& 0.32& 0.45&\multicolumn{1}{|c|}{\checkmark}\\ 
		\hline
		Instance  2 Costs& 0.2& \textcolor{red}{\textbf{0.25}}& 0.269&0.2& \textcolor{red}{\textbf{0.25}}& 0.395&\multicolumn{1}{|c|}{\checkmark}\\ 
		\hline
		Instance  3 Costs& \textcolor{red}{\textbf{0.05}}& 0.309& 0.45&\textcolor{red}{\textbf{0.02}}& 0.34& 0.45&\multicolumn{1}{|c|}{\checkmark}\\ 
		\hline
		Instance  4 Costs& 0.2& 0.25& \textcolor{red}{\textbf{0.255}}&0.2& 0.25& \textcolor{red}{\textbf{0.3}}&\multicolumn{1}{|c|}{\checkmark}\\ 
		\hline
		Instance  5 Costs& \textcolor{red}{\textbf{0.05}}& 0.146& 0.3&0.2& \textcolor{red}{\textbf{0.25}}& 0.325&\multicolumn{1}{|c|}{\ding{53}}\\ 
		\hline
	\end{tabular}
	\caption{Costs of different problem instances which are derived from real datasets. $\WD$ property doesn't hold for Instance 5 and cost of optimal arm is in  \textcolor{red}{\bf red bold} font.}
	\label{table:real_datasets}
\end{table}

\paragraph{Verifying $\WD$ property:} The error-rate associated with each arm is known to us as given in  \cref{table:bsc} and \cref{table:real_datasets} (but note that the error-rates are unknown to the algorithm); hence we can find an optimal arm for a given problem instance. After knowing optimal arm, $\WD$ property is verified by using the disagreement probability estimates after $10000$ rounds.

\subsection{Experimental Results} 
We fix the time horizon to $10000$ in all experiments and repeat each experiment $500$ times. The average regret is presented with a $95$\% confidence interval. The vertical line on each plot shows the confidence interval.
\begin{figure}[!h]
	\subfigure[\small BSC Dataset]{
		\label{fig:bsc}
		\includegraphics[scale=0.31425]{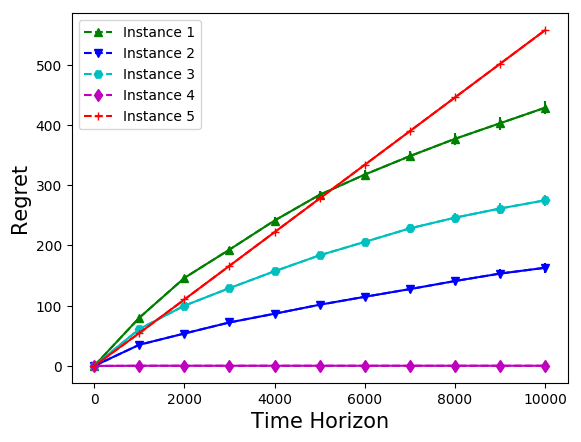}
	}
	\subfigure[\small PIMA Indians Diabetes]{
		\label{fig:pima}
		\includegraphics[scale=0.31425]{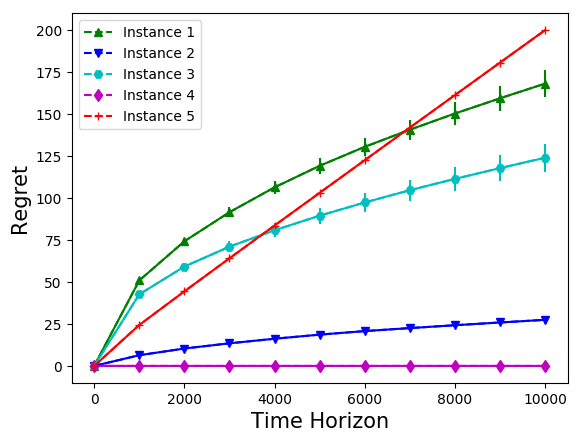}
	}
	\subfigure[\small Heart Disease]{
		\label{fig:heart}
		\includegraphics[scale=0.31425]{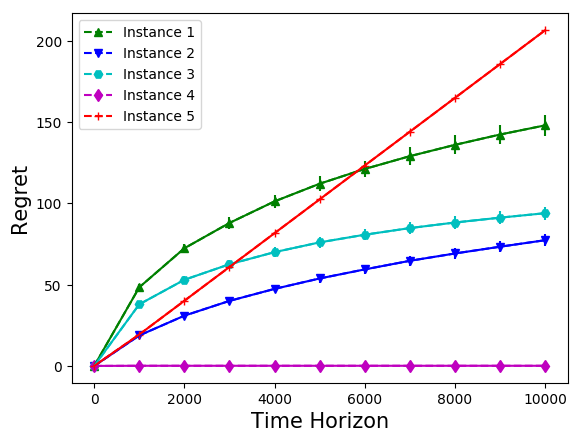}
	}
	\caption{\small Regret of \ref{alg:TS_USS} for different problem instances derived from synthetic and real datasets.}
	\label{fig:regret}
\end{figure}

\paragraph{Expected Cumulative Regret v/s Time Horizon:} The {\it Regret} of \ref{alg:TS_USS} versus {\it Time Horizon} plots for the different problem instances derived from BSC Dataset and two real datasets are shown in Figure \ref{fig:regret}. These plots verify that any instance that satisfies $\WD$ property has sub-linear regret. Note that \ref{alg:TS_USS} has linear regret for the Instance $5$ as it does not satisfy $\WD$ property.
We also compare the performance of \ref{alg:TS_USS} with existing UCB based algorithm USS-UCB algorithm of \cite{AISTATS19_verma2019online} with value of $\alpha=0.5$ (best possible parameter value mentioned in the paper) and Algorithm 2 of \cite{AISTATS17_hanawal2017unsupervised} with value of $\alpha=1.5$ (as used in the paper) on Heart Disease and PIMA Indians Diabetes datasets. As expected, \ref{alg:TS_USS} outperforms other algorithms with large margins as shown in \cref{fig:dCompare} (PIMA Indians Diabetes dataset) and \cref{fig:hCompare} (Heart Disease dataset). 
\begin{figure}[!h]
	\subfigure[\small PIMA Indians Diabetes]{
		\label{fig:dCompare}
		\includegraphics[scale=0.3114]{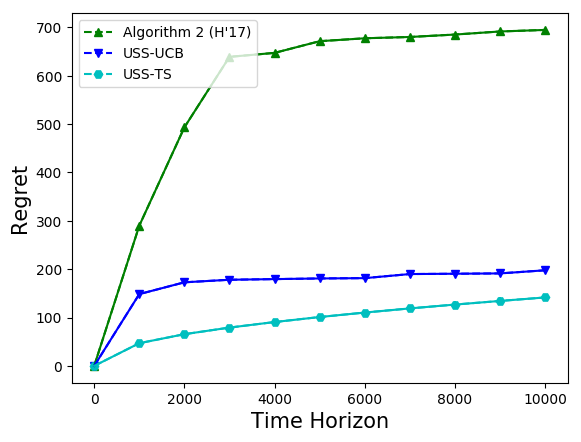}
	}
	\subfigure[\small Heart Disease]{
		\label{fig:hCompare}
		\includegraphics[scale=0.3114]{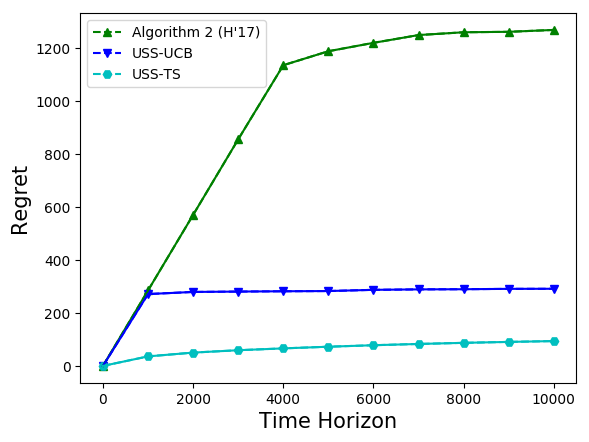}
	}
	\subfigure[\small BSC Dataset]{
		\label{fig:xi}
		\includegraphics[scale=0.3114]{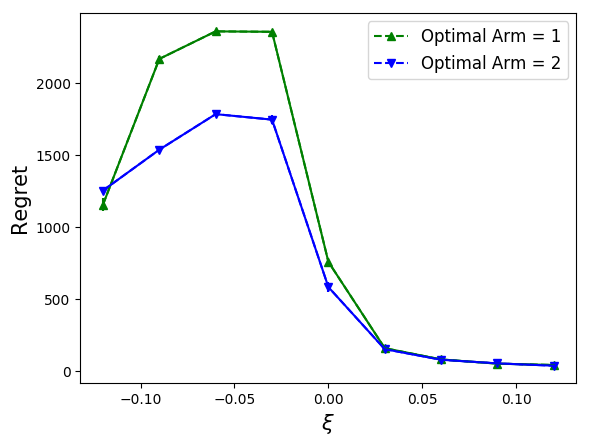}
	}
	\caption{\small Comparing regret of \ref{alg:TS_USS} with USS-UCB \citep{AISTATS19_verma2019online} and Algorithm 2 \citep{AISTATS17_hanawal2017unsupervised} for real datasets (\cref{fig:dCompare} and \cref{fig:hCompare}). Regret behavior of \ref{alg:TS_USS} versus WD property for BSC Dataset is shown in \cref{fig:xi}.}
	\label{fig:allExp}
\end{figure}

\paragraph{Learnability v/s $\WD$ Property:} We experiment with different problem instances of the BSC dataset to know the relationship between regret of \ref{alg:TS_USS} and $\WD$ property. We fixed an optimal arm and vary the cumulative cost of using arms in such a way that we pass from the case where $\WD$ property does not hold ($\xi \le 0$ or $C_j - C_{i^\star} \in (\gamma_{i^\star} - \gamma_j, p_{i^\star j}]$  for any $j>i^\star$ where $\xi := \min_{j>i^\star} \xi_j$) to the situation where $\WD$ property holds $(\xi > 0$). 
When $\WD$ property does not hold for any problem instance, \ref{alg:TS_USS} treats a sub-optimal arm as the optimal arm. In such problem instances, as $C_j - C_{i^\star}$ increases, the regret will also increase due to selection of sub-optimal arm by \ref{alg:TS_USS} until $\WD$ property does not satisfy for that problem instance. When $\WD$ property does not satisfy for a problem instance then $C_j - C_{i^\star} \in (\gamma_{i^\star} - \gamma_j, p_{i^\star j}]$ holds in such cases, hence, it is easy to verify that $\xi$ can not be smaller than $-\max(p_{i^\star j}- (\gamma_{i^\star} - \gamma_j))$.

We consider the problem instances with the minimum possible value of $\xi$ for which problem instance satisfies $\WD$ property. Then we increase the value of $\xi$ by increasing the cumulative cost of the arm. The regret versus $\xi$ plots for BSC Dataset is shown in \cref{fig:xi}. It can be observed that there is a transition at $\xi = 0$. Through our experiments, we show that the stronger the $\WD$ property (large value of $\xi$) for the problem instance, it is easier to identify the optimal arm and, hence the less regret is incurred by \ref{alg:TS_USS}.

	\section{Conclusion}
    We studied the unsupervised sequential selection (USS) problem, where both accuracy and cost of using arms are important. It is a variant of the stochastic partial monitoring problem, where the losses are not observed. Still, one can compare the feedback of two arms to see if they agree or disagree. We estimate the disagreement probability between each pair of the arms and develop an algorithm named \ref{alg:TS_USS} that achieves near-optimal regret. We demonstrate our algorithms' performance on two real datasets and empirically show that any problem instance satisfying WD property has sub-linear regret. 
	We ignored the inherent side observations due to the arms' cascade structure. By using these side observations, one can tighten the regret bounds. Another interesting future direction is to develop algorithms that relax the cascade structure assumption and selects the best subset of arms.

	 \section*{Acknowledgments}
	 Manjesh K. Hanawal would like to thank the support from INSPIRE faculty fellowship from DST and Early Career Research (ECR) Award from SERB, Govt. of India.

	\bibliography{ref}

	\ifsup
		\newpage
		\centerline{
			\Large \bf Supplementary Material for }~\\
		\centerline{
			\Large \bf `Thompson Sampling for Unsupervised Sequential Selection'}
		\hrulefill \\
		
		\appendix
		\label{sec:appendix}

\section{Useful results needed to prove regret bounds of \ref{alg:TS_USS}}
We use the following results in our proofs.

\begin{fact}[Chernoff bound for Bernoulli distributed random variables]
	\label{fact:chernoff}
	Let $X_1, \ldots, X_n$ be i.i.d. Bernoulli distributed random variables. Let $\hat{\mu}_n=\frac{1}{n}\sum_{i=1}^{n}X_i$ and $\mu = \EE{X_i}$. Then, for any $\epsilon \in (0, 1 - \mu)$,
	\begin{equation*}
		\Prob{\hat{\mu}_n \ge \mu + \epsilon} \le \exp\left( -d(\mu + \epsilon, \mu)n\right),
	\end{equation*}
	and, for any $\epsilon \in (0, \mu)$,
	\begin{equation*}
		\Prob{\hat{\mu}_n \le \mu - \epsilon} \le \exp\left( -d(\mu - \epsilon, \mu)n\right),
	\end{equation*}
	where $d(x, \mu)= x\log\left(\frac{x}{\mu}\right) + (1-x)\log\left(\frac{1-x}{1-\mu}\right)$.
\end{fact}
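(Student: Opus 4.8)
The plan is to prove the Chernoff bound for Bernoulli random variables via the standard exponential-moment (Cram\'er--Chernoff) method, optimized so that the exponent comes out exactly as the binary relative entropy $d(\cdot,\cdot)$. I would treat the two inequalities separately but symmetrically, handling the upper-tail bound $\Prob{\hat{\mu}_n \ge \mu + \epsilon}$ first and then deriving the lower-tail bound either by an analogous computation or by applying the upper-tail result to the flipped variables $1 - X_i$ (which are i.i.d.\ Bernoulli with mean $1-\mu$).

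For the upper tail, I would start from the exponential Markov inequality: for any $s > 0$,
\begin{equation*}
    \Prob{\hat{\mu}_n \ge \mu + \epsilon} = \Prob{\sum_{i=1}^n X_i \ge n(\mu+\epsilon)} \le e^{-sn(\mu+\epsilon)} \EE{e^{s \sum_{i=1}^n X_i}}.
\end{equation*}
By independence the moment generating factorizes, $\EE{e^{s\sum_i X_i}} = \left(\EE{e^{sX_1}}\right)^n = \left(1 - \mu + \mu e^s\right)^n$, so the bound becomes $\exp\!\left(-n\left[s(\mu+\epsilon) - \log(1-\mu+\mu e^s)\right]\right)$. The key step is to choose $s$ to maximize the exponent $g(s) \doteq s(\mu+\epsilon) - \log(1-\mu+\mu e^s)$. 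Setting $g'(s)=0$ gives the optimizer $e^{s^\star} = \frac{(\mu+\epsilon)(1-\mu)}{\mu(1-\mu-\epsilon)}$, which is positive precisely because $\epsilon \in (0, 1-\mu)$ guarantees $0 < \mu+\epsilon < 1$. Substituting $s^\star$ back and simplifying the logarithms, the exponent collapses exactly to
\begin{equation*}
    g(s^\star) = (\mu+\epsilon)\log\frac{\mu+\epsilon}{\mu} + (1-\mu-\epsilon)\log\frac{1-\mu-\epsilon}{1-\mu} = d(\mu+\epsilon, \mu),
\end{equation*}
which yields the claimed bound $\Prob{\hat{\mu}_n \ge \mu+\epsilon} \le \exp(-n\, d(\mu+\epsilon,\mu))$.

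For the lower tail, the cleanest route is the reduction: let $X_i' \doteq 1 - X_i$, so $\hat{\mu}_n' = 1 - \hat{\mu}_n$ and $\EE{X_i'} = 1-\mu$. Then $\Prob{\hat{\mu}_n \le \mu - \epsilon} = \Prob{\hat{\mu}_n' \ge (1-\mu) + \epsilon}$, and since $\epsilon \in (0,\mu)$ ensures $\epsilon \in (0, 1-(1-\mu))$, the already-proved upper-tail bound applies with mean $1-\mu$, giving an exponent $d((1-\mu)+\epsilon,\, 1-\mu)$. Finally I would verify the symmetry identity $d(1-a, 1-b) = d(a,b)$, which is immediate from the definition of $d$ since swapping $x \leftrightarrow 1-x$ and $\mu \leftrightarrow 1-\mu$ just interchanges the two terms; this shows $d((1-\mu)+\epsilon, 1-\mu) = d(\mu-\epsilon, \mu)$ and completes the proof.

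The main obstacle, though it is purely computational rather than conceptual, is the algebraic simplification that turns $g(s^\star)$ into the relative-entropy form: one must carefully substitute the optimizing $e^{s^\star}$, recognize that $1 - \mu + \mu e^{s^\star}$ simplifies to $\frac{1-\mu}{1-\mu-\epsilon}$, and then combine the resulting logarithms without sign errors. I would double-check this by confirming the edge behavior (the exponent vanishes as $\epsilon \to 0$ and the derivative identities hold) to ensure the closed form is correct.
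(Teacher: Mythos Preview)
Your proposal is correct and complete. The paper itself does not actually supply a proof of this fact; it merely cites Section~10.1 of Lattimore and Szepesv\'ari's \emph{Bandit Algorithms} for the argument. The Cram\'er--Chernoff method you outline (exponential Markov inequality, factorize the moment generating function, optimize over $s$, and recognize the optimized exponent as the binary KL divergence) is exactly the standard derivation given in that reference, and your symmetry reduction $X_i' = 1 - X_i$ for the lower tail is the usual shortcut. Your intermediate simplifications ($e^{s^\star}$ and $1-\mu+\mu e^{s^\star} = \tfrac{1-\mu}{1-\mu-\epsilon}$) are all correct, so there is nothing to add.
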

\noindent
See Section 10.1 of Chapter 10 of book `Bandit Algorithms' \citep{BOOK_lattimorebandit} for proof.

\begin{fact}[Pinsker's Inequality for Bernoulli distributed random variables]
	\label{fact:pinsker}
	For $p,q \in (0,1)$, the KL divergence between two Bernoulli distributions is bounded as:
	\begin{equation*}
		d(p,q) \ge 2(p-q)^2.
	\end{equation*}
\end{fact}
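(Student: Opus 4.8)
The statement is the Bernoulli case of Pinsker's inequality, and the plan is to reduce it to an elementary one-variable calculus argument. Fix $p \in (0,1)$ and regard the bound as a statement about the function
\[
	g(q) \doteq d(p,q) - 2(p-q)^2 = p\log\frac{p}{q} + (1-p)\log\frac{1-p}{1-q} - 2(p-q)^2, \qquad q \in (0,1).
\]
The goal becomes to show $g(q) \ge 0$ for every $q \in (0,1)$. The anchor of the whole argument is the trivial evaluation $g(p) = 0$, since $d(p,p) = 0$ and $(p-p)^2 = 0$; so it suffices to prove that $q = p$ is a global minimizer of $g$ on $(0,1)$.

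The next step is to differentiate in $q$. A direct computation gives
\[
	g'(q) = -\frac{p}{q} + \frac{1-p}{1-q} + 4(p-q),
\]
and putting the first two terms over the common denominator $q(1-q)$ collapses them to $(q-p)/\bigl(q(1-q)\bigr)$. Hence $g'$ factors as
\[
	g'(q) = (q-p)\left(\frac{1}{q(1-q)} - 4\right),
\]
which is the pivotal identity: it cleanly separates the ``direction'' factor $(q-p)$ from a factor whose sign I can control uniformly in $q$.

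The main (and only) nontrivial ingredient is then the elementary estimate $q(1-q) \le 1/4$ on $(0,1)$, with the maximum attained at $q = 1/2$; this forces $\tfrac{1}{q(1-q)} - 4 \ge 0$ throughout the interval. Consequently the sign of $g'(q)$ agrees with the sign of $(q-p)$: we have $g'(q) \le 0$ for $q < p$ and $g'(q) \ge 0$ for $q > p$, so $g$ is nonincreasing on $(0,p)$ and nondecreasing on $(p,1)$. Thus $q=p$ is a global minimizer and $g(q) \ge g(p) = 0$, which is exactly $d(p,q) \ge 2(p-q)^2$. I would note that this unimodality argument does not rely on convexity of $g$ (indeed $g$ need not be convex, since $g''(q) = p/q^2 + (1-p)/(1-q)^2 - 4$ can be negative); equivalently, one may simply integrate the factored derivative, writing $g(q) = \int_p^q (s-p)\bigl(\tfrac{1}{s(1-s)} - 4\bigr)\,ds$, whose integrand is nonnegative whether one integrates left-to-right for $q>p$ or right-to-left for $q<p$. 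The whole difficulty is thus concentrated in spotting the factorization and the $q(1-q)\le 1/4$ bound; there is no genuine obstacle beyond this.
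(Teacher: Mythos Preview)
Your proof is correct. Note, however, that the paper does not actually prove this fact: it merely records Pinsker's inequality for Bernoulli distributions as a standard result and invokes it later. So there is no ``paper's own proof'' to compare against; your self-contained calculus argument via the factorization $g'(q) = (q-p)\bigl(\tfrac{1}{q(1-q)} - 4\bigr)$ together with $q(1-q)\le 1/4$ is a clean and complete justification that the paper simply omits.
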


\begin{fact}
	\label{fact:expUB}
	Let $x > 0$ and $D > 0$. Then, for any $a \in (0,1)$,
	\begin{equation*}
		\frac{1}{\exp^{Dx} - 1} \le 
		\begin{cases}
			\frac{\exp^{-Dx}}{1-a} & \left(x \ge \ln\left(1/a\right)/D \right)\\
			\frac{1}{Dx} & \left(x < \ln\left(1/a\right)/D \right). 
		\end{cases}
	\end{equation*}
	Further, we have,
	\begin{equation*}
		\sum_{x=1}^{n} \frac{1}{\exp^{Dx} - 1} \le \Theta\left( \frac{1}{D^2} + \frac{1}{D}\right).
	\end{equation*}
\end{fact}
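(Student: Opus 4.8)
The plan is to prove the two claims in turn, deriving the closed-form summation bound as a consequence of the pointwise piecewise bound; both rest on elementary manipulations of $e^{Dx}-1$ plus the standard inequality $e^y\ge 1+y$.

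For the pointwise bound I would first rewrite $\tfrac{1}{e^{Dx}-1}=\tfrac{e^{-Dx}}{1-e^{-Dx}}$. In the regime $x\ge \ln(1/a)/D$ we have $Dx\ge\ln(1/a)$, hence $e^{-Dx}\le a$, so that $1-e^{-Dx}\ge 1-a$ and the identity gives $\tfrac{1}{e^{Dx}-1}=\tfrac{e^{-Dx}}{1-e^{-Dx}}\le \tfrac{e^{-Dx}}{1-a}$, which is the first branch. For the complementary regime I would apply $e^y\ge 1+y$ with $y=Dx>0$, which yields $e^{Dx}-1\ge Dx$ and hence $\tfrac{1}{e^{Dx}-1}\le\tfrac{1}{Dx}$, the second branch. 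Note the second branch in fact holds for every $x>0$; the threshold only decides which branch is tighter, so no casework beyond these two lines is needed here.

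For the summation bound I would instantiate the pointwise bound at the fixed constant $a=1/e$, so that the threshold becomes $x_0=\ln(1/a)/D=1/D$, and split $\sum_{x=1}^{n}$ at $x_0$. On the low-index block $1\le x<1/D$ I use the second branch $\tfrac{1}{Dx}\le\tfrac1D$ together with the fact that there are at most $1/D$ such indices, contributing at most $\tfrac1D\cdot\tfrac1D=\tfrac{1}{D^2}$. On the high-index block $x\ge 1/D$ I use the first branch and bound the truncated geometric series $\sum_{x\ge\lceil 1/D\rceil}e^{-Dx}=\tfrac{e^{-D\lceil 1/D\rceil}}{1-e^{-D}}$, multiplied by $\tfrac{1}{1-a}=\tfrac{e}{e-1}$, and then invoke $\tfrac{1}{1-e^{-D}}\le 1+\tfrac1D$ (which follows from $e^{D}-1\ge D$). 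Adding the two blocks produces a bound of the form $\tfrac{1}{D^2}+O\!\big(1+\tfrac1D\big)$.

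The main obstacle is getting this last expression to collapse cleanly to $O(1/D^2+1/D)$ rather than carrying a stray additive constant, and I would resolve it by treating the two regimes of $D$ separately. For $D<1$ the low-index block is nonempty and $1\le 1/D$, so every $O(1)$ term is absorbed into $O(1/D)$; for $D\ge 1$ the low-index block is empty and $\lceil 1/D\rceil=1$, so the high-index block is just $\sum_{x\ge 1}e^{-Dx}=\tfrac{e^{-D}}{1-e^{-D}}\le\tfrac{e^{-D}}{1-e^{-1}}=O(1/D)$, using $e^{-D}\le 1/D$. In both cases the total is $O(1/D^2+1/D)$, matching the claimed $\Theta$ bound read as an upper bound; this is also precisely the regime (small $D$, e.g. $D=\Delta(x)^2/4$) in which the fact is invoked in \cref{lem:betaExpBound}.
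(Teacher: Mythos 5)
Your proposal is correct and follows essentially the same route as the paper's proof: the identical rewriting $\frac{1}{e^{Dx}-1}=\frac{e^{-Dx}}{1-e^{-Dx}}$ with $e^{-Dx}\le a$ giving the first branch, $e^{y}\ge 1+y$ giving the second, and the same split of the sum at the threshold $\ln(1/a)/D$ with the low block bounded by (number of terms) $\times\, \frac{1}{D}\le \frac{\ln(1/a)}{D^2}$ (you merely specialize $a=1/e$ where the paper keeps $a$ general). The one divergence is the tail: the paper bounds $\sum_{x\ge \ln(1/a)/D} e^{-Dx}$ by the integral $\int_{0}^{\infty} e^{-Dx}\,dx = \frac{1}{D}$, obtaining $\frac{1}{(1-a)D}$ directly with no stray additive constant, whereas your exact geometric-series evaluation plus the two-case analysis on $D<1$ versus $D\ge 1$ is correct but slightly more laborious.
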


\begin{proof}
	Using $\exp^y \ge y + 1$ (by Taylor Series expansion), we have $\frac{1}{\exp^{Dx} - 1} \le \frac{1}{Dx}$ as $\exp^{Dx} - 1 \ge Dx$. We can re-write, $\frac{1}{\exp^{Dx} - 1} = \frac{\exp^{-Dx}}{1-\exp^{-Dx}}$. Since $\exp^{-Dx}$ is strictly decreasing function for all $Dx>0$, it is easy to check that $\exp^{-Dx} \le a$ holds for any $x \ge \ln\left(1/a\right)/D$ and $a \in (0,1)$. Hence, $\frac{\exp^{-Dx}}{1-\exp^{-Dx}} \le \frac{\exp^{-Dx}}{1-a}$ for all $x \ge \ln\left(1/a\right)/D$.
	
	~\\
	Now we will prove the second part,
	\begin{align*}
		\sum_{x=1}^{n} \frac{1}{\exp^{Dx} - 1} &\le \frac{\ln(1/a)}{D^2} + \sum_{x\ge \ln\left(1/a\right)/D}^{n} \frac{\exp^{-Dx}}{1-a}  \\
		 &\le \frac{\ln(1/a)}{D^2} + \frac{1}{(1-a)} \int_{x = 0}^{\infty} \exp^{-Dx} dx \\
		 &= \frac{\ln(1/a)}{D^2} + \frac{1}{(1-a)}  \left.\left(\frac{\exp^{-Dx}}{-D}\right)\right|_{x = 0}^{\infty} \\
		 &= \frac{\ln(1/a)}{D^2} + \frac{1}{(1-a)}  \left( 0 - \frac{\exp^{0}}{-D}\right) \\
		 &= \frac{\ln(1/a)}{D^2} + \frac{1}{(1-a)D}\\
		 \implies \sum_{x=1}^{n} \frac{1}{\exp^{Dx} - 1} &\le \Theta\left( \frac{1}{D^2} + \frac{1}{D}\right). \qedhere
	\end{align*}
\end{proof}

\begin{fact}
	\label{fact:diffKL}
	Let $\epsilon \in (0,1)$ and $0< x< y < z < 1$. If $d(y,z) = d(x,z)/(1+\epsilon)$ then
	\begin{equation*}
		y-x \ge \frac{\epsilon}{1 + \epsilon} \cdot \frac{d(x,z)}{\ln\left( \frac{z(1-x)}{x(1-z)}\right)}.
	\end{equation*}
\end{fact}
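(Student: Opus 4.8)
The plan is to treat the binary relative entropy $d(\cdot,z)$ as a one–dimensional function of its first argument and exploit its convexity. Define $f(t) \doteq d(t,z)$ for $t \in (0,1)$. A direct differentiation of $f(t) = t\ln(t/z) + (1-t)\ln\big((1-t)/(1-z)\big)$ gives $f'(t) = \ln\!\big(t(1-z)/(z(1-t))\big)$ and $f''(t) = 1/t + 1/(1-t) > 0$, so $f$ is strictly convex on $(0,1)$. This is the only calculus input; everything else is sign bookkeeping.

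Next I would apply the standard convexity (secant–versus–tangent) inequality. Since $0 < x < y$ and $f$ is convex, the slope of the chord through $x$ and $y$ dominates the derivative at the left endpoint:
\begin{equation*}
f'(x) \le \frac{f(y)-f(x)}{y-x}, \qquad\text{i.e.}\qquad \ln\!\left(\frac{x(1-z)}{z(1-x)}\right) \le \frac{d(y,z)-d(x,z)}{y-x}.
\end{equation*}
Multiplying through by $y-x>0$ and negating both sides turns the left–endpoint derivative into the quantity appearing in the statement, yielding $d(x,z)-d(y,z) \le (y-x)\,\ln\!\big(z(1-x)/(x(1-z))\big)$. (The same identity can be obtained from the mean value theorem together with the fact that $t\mapsto \ln\big(z(1-t)/(t(1-z))\big)$ is decreasing, so that its value at an intermediate point $c \in (x,y)$ is at most its value at $x$; I prefer the convexity route since it avoids locating the intermediate point.)

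Finally I would feed in the hypothesis. Because $0 < x < z$ we have $z(1-x)/(x(1-z)) > 1$, so the logarithm is strictly positive and dividing preserves the inequality:
\begin{equation*}
y - x \ge \frac{d(x,z)-d(y,z)}{\ln\!\left(\dfrac{z(1-x)}{x(1-z)}\right)}.
\end{equation*}
The condition $d(y,z) = d(x,z)/(1+\epsilon)$ gives $d(x,z)-d(y,z) = \frac{\epsilon}{1+\epsilon}\,d(x,z)$; substituting this into the numerator produces exactly the claimed bound.

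The argument is short, and the only place to be careful is the main obstacle: keeping the signs straight. The chord inequality is stated with the left–endpoint derivative $f'(x)$, which is negative (as $x<z$), so one negation flips it into the positive log term $\ln(z(1-x)/(x(1-z)))$ that appears in the denominator; and positivity of that log term—guaranteed by $x<z$—is precisely what licenses the final division without reversing the inequality.
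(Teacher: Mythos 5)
Your proof is correct, and it takes a cleaner route than the paper's. The paper proceeds purely algebraically: starting from the identity $d(p,q) = -p\,l(p,q) + \ln\bigl(\tfrac{1-p}{1-q}\bigr)$ with $l(p,q) = \ln\bigl(\tfrac{q(1-p)}{p(1-q)}\bigr)$, it computes $y\,l(y,z) - x\,l(x,z)$, substitutes the hypothesis $d(y,z) = d(x,z)/(1+\epsilon)$ midway, adds $y\,(l(x,z)-l(y,z))$ to both sides, and after recombining logarithms recognizes the residual term as $d(y,x) \ge 0$, which it then discards. Your argument reaches the same place in one conceptual step: the secant--tangent inequality $f'(x) \le \frac{f(y)-f(x)}{y-x}$ for the convex function $f(t) = d(t,z)$, whose derivative $f'(x) = \ln\bigl(\tfrac{x(1-z)}{z(1-x)}\bigr)$ is exactly $-l(x,z)$. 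In fact the two proofs are the same inequality in different clothing: the slack in your tangent-line bound is the Bregman divergence $f(y) - f(x) - f'(x)(y-x)$, which for this particular $f$ works out to be precisely $d(y,x)$ --- the very term the paper isolates by hand and drops. What your route buys is transparency and robustness: the key inequality $(y-x)\ln\bigl(\tfrac{z(1-x)}{x(1-z)}\bigr) \ge d(x,z) - d(y,z)$ is derived without using the hypothesis at all (the relation $d(y,z)=d(x,z)/(1+\epsilon)$ enters only in the final substitution), and the convexity computation $f''(t) = 1/t + 1/(1-t) > 0$ is mechanical. What the paper's version buys is self-containedness --- no calculus, only logarithm identities and nonnegativity of KL divergence. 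Your sign bookkeeping is also handled correctly: $f'(x)<0$ for $x<z$, the negation produces the positive logarithm $\ln\bigl(\tfrac{z(1-x)}{x(1-z)}\bigr) > 0$ (since $z>x$), and that positivity is what permits the final division without reversing the inequality.
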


\begin{proof}
	By definition
	\begin{align*}
		d(p,q) &= p\ln\frac{p}{q} + (1-p)\ln\left(\frac{1-p}{1-q}\right)\\
		&=\ln\left(\left(\frac{p}{q}\right)^p \left(\frac{1-p}{1-q}\right)^{1-p} \right) \\
		&= \ln\left(\left(\frac{q(1-p)}{p(1-q)}\right)^{-p} \right) + \ln\left(\frac{1-p}{1-q}\right)\\
		\implies d(p,q) &= -p\ln\left(\frac{q(1-p)}{p(1-q)}\right) + \ln\left(\frac{1-p}{1-q}\right).
	\end{align*}
	Set $l(p,q) = \ln\left(\frac{q(1-p)}{p(1-q)}\right)$. Note that $l(p,\cdot)$ is a strictly decreasing function of $p$ and positive for all $p < q$. We can re-arrange above equation as
	\begin{equation*}
		p\cdot l(p,q) = -d(p,q) + \ln\left(\frac{1-p}{1-q}\right).
	\end{equation*}
	Using above equation, we have
	\begin{align*}
		y\cdot l(y,z) - x\cdot l(x,z) &= -d(y,z) +  \ln\left(\frac{1-y}{1-z}\right) + d(x,z) -  \ln\left(\frac{1-x}{1-z}\right).
		\intertext{Using $d(y,z) = d(x,z)/(1+\epsilon)$,}
		y\cdot l(y,z) - x\cdot l(x,z) &= \frac{\epsilon}{1+\epsilon}d(x,z) +  \ln\left(\frac{1-y}{1-x}\right).
		\intertext{After adding $y(l(x,z) - l(y,z))$ both side, we have }
		(y - x) l(x,z) &= \frac{\epsilon}{1+\epsilon}d(x,z) +  \ln\left(\frac{1-y}{1-x}\right) + y(l(x,z) - l(y,z)).
		\intertext{Using $l(x,z) = \ln\left(\frac{z(1-x)}{x(1-z)}\right)$ and $l(y,z)= \ln\left(\frac{z(1-y)}{y(1-z)}\right)$}
		&= \frac{\epsilon}{1+\epsilon}d(x,z) +  \ln\left(\frac{1-y}{1-x}\right) + y \ln\left(\frac{y(1-x)}{x(1-y)}\right) \\
		&= \frac{\epsilon}{1+\epsilon}d(x,z) +  \ln\left(\left( \frac{y(1-x)}{x(1-y)} \right)^y\cdot \frac{1-y}{1-x}\right) \\
		&=\frac{\epsilon}{1+\epsilon}d(x,z) +  \ln\left(\left(\frac{y}{x}\right)^y\left( \frac{1-y}{1-x}\right)^{1-y}\right)\\
		&=\frac{\epsilon}{1+\epsilon}d(x,z) + d(y,x)
		\intertext{As $d(p,q) \ge 0$ and dividing both side by $l(x,z)$,}
		\implies y- x  &\ge \frac{\epsilon}{1+\epsilon}\cdot \frac{d(x,z)}{l(x,z)}.
	\end{align*}
	Substituting value of $l(x,z)$ in the above equation, we get
	\begin{equation*}
		y -x \ge \frac{\epsilon}{1 + \epsilon} \cdot \frac{d(x,z)}{\ln\left( \frac{z(1-x)}{x(1-z)}\right)}. \qedhere
	\end{equation*}	
\end{proof}

\section{Leftover proofs from \cref{sec:ts_uss}}

\probLow*
\begin{proof}
	Applying \cref{lem:relationLowerOptimal} and properties of conditional expectations, we have
	\begin{align*}
		\sum_{t=1}^T \Prob{I_t=j, j < \ist} &= \sum_{t=1}^T  \EE{ \Prob{I_t=j, j < \ist|\Ht}}.
		\intertext{As $q_{j,t}$ is fixed given $\Ht$,}
		\implies \sum_{t=1}^T  \Prob{I_t=j, j < \ist}  &\le \sum_{t=1}^T  \EE{\frac{(1-q_{j,t})}{q_{j,t}}\Prob{I_t \ge i^\star|\Ht}}\\
		&\le \sum_{t=1}^T  \EE{\EE{\frac{(1-q_{j,t})}{q_{j,t}}\one{I_t \ge i^\star}|\Ht}}.
		\intertext{Using law of iterated expectations,}
		\implies \sum_{t=1}^T  \Prob{I_t=j, j < \ist}  &\le  \sum_{t=1}^T  \EE{\frac{(1-q_{j,t})}{q_{j,t}}\one{I_t \ge i^\star}}.  \numberthis \label{eq:lowProbSum}
	\end{align*}
	Let $s_m$ denote the time step at which the output of arm $i^\star$ is observed for the $m^{th}$ time for $m\ge1$, and let $s_0 = 0$. For $j<i^\star$, whenever the output from arm $i^\star$ is observed then the output from arm $j$ is also observed due to the cascade structure. Note that $q_{j,t} = \Prob{\opijsts > \pijs - \xi_j|\Ht}$ changes only when the distribution of $\opijsts$ changes, that is, only on the time step when the feedback from arms $i^\star$ and $j$ are observed. It only happens when selected arm $I_t\ge \ist$. Hence, $q_{j,t}$ is the same at all time steps $t \in \{s_m +1, \ldots, s_{m+1}\}$ for every $m$. Using this fact, we can decompose the right hand side term in \cref{eq:lowProbSum} as follows,
	\begin{align*}
		\sum_{t=1}^T  \EE{\frac{(1-q_{j,t})}{q_{j,t}}\one{I_t \ge i^\star}} &= \sum_{m=0}^{T-1}  \EE{\frac{(1-q_{j,s_m+1})}{q_{j,s_m+1}} \sum_{t=s_m + 1}^{s_{m+1}}\one{I_t \ge i^\star}} \\
		&\le  \sum_{m=0}^{T-1}  \EE{\frac{(1-q_{j,s_m+1})}{q_{j,s_m+1}}} \\
		&=  \sum_{k=0}^{T-1} \EE{\frac{1}{q_{j,s_m+1}} - 1}.
	\end{align*}
	Using above bound in \cref{eq:lowProbSum}, we get
	\begin{equation*}
		\sum_{t=1}^T  \Prob{I_t=j, j < \ist}  \le  \sum_{m=0}^{T-1} \EE{\frac{1}{q_{j,s_m+1}}-1}.
	\end{equation*}
	Substituting the bound from \cref{lem:betaExpBound} with $\mu = \pijs, x = \pijs - \xi_j, \Delta(x) = \xi_j,$ and $q_n(x)= q_{j,s_m}$, we obtain the following bound,
	\begin{equation*}
		\sum_{t=1}^T  \Prob{I_t=j, j < \ist}  \le \frac{24}{\xi_j^2} + \sum_{s \ge 8/\xi_j} \Theta\left(\exp^{-{s\xi_j^2}/{2}} + \frac{\exp^{-{sd( \pijs - \xi_j,\pijs)}}}{(s+1)\xi_j^2} + \frac{1}{\exp^{{s\xi_j^2}/{4}} - 1} \right). \qedhere
	\end{equation*}
	
\end{proof}

\probHighPartOne*
\begin{proof}
	Let $s_m$ denote the time step at which the outputs of arm $i^\star$ and $j$ is observed for the $m^{th}$ time for $m\ge1$, and let $s_0 = 0$. Note that probability $\Prob{\hpijst > x_j}$ changes when the outputs from both arm $i^\star$ and $j$ are observed. Hence, we have
	\begin{align*}
		\sum_{t=1}^T \Prob{\hpijst >x_j} &\le \sum_{m=0}^{T-1} \Prob{\hpijs(s_{m+1}) >x_j}\\
		&= \sum_{m=0}^{T-1} \Prob{\hpijs(s_{m+1}) - \pijs > x_j-\pijs}\\
		&\le \sum_{m=0}^{T-1} \exp^{-kd(\pijs + x_j-\pijs, \pijs)}  \hspace{10mm} \text{(using \cref{fact:chernoff}})\\
		&=\sum_{m=0}^{T-1} \exp^{-kd(x_j, \pijs)}.
	\end{align*}
	Using $\sum_{s\ge 0}\exp^{-sa} \le 1/a$, we get
	\begin{equation*}
	\sum_{t=1}^T \Prob{\hpijst >x_j} \le \frac{1}{d(x_j, \pijs)}. \qedhere
	\end{equation*}
\end{proof}

\probHigh*
\begin{proof}
	Let $\pijs < x_j < y_j < \pijs + \xi_j$ for any $j > i^\star$. Than, 
	\begin{align*}
		\sum_{t=1}^T\Prob{j \succ_t i^\star, j > \ist} &= \sum_{t=1}^T \Prob{\opijsts > \pijs + \xi_j} \\
		&\le \sum_{t=1}^T \Prob{\opijsts > y_j}\\
		& \le  \sum_{t=1}^T \Prob{\hpijst \le x_j, \opijsts > y_j} + \sum_{t=1}^T \Prob{\hpijst > x_j}.
	\end{align*}
	Using \cref{lem:probHighPart1} and \cref{lem:probHighPart2}, we have
	\begin{equation*}
		\sum_{t=1}^T\Prob{j \succ_t i^\star, j > \ist} \le  \frac{\ln T}{d(x_j, y_j)} + 1 + \frac{1}{d(x_j, \pijs)}.
	\end{equation*}
	For $\epsilon \in (0,1)$, we set $x_j \in (\pijs, \pijs + \xi_j)$ such that $d(x_j, \pijs + \xi_j) = d(\pijs, \pijs + \xi_j)/(1+\epsilon)$, and set $y_j \in (x_j, \pijs + \xi_j)$ such that $d(x_j, y_j) = d(x_j, \pijs + \xi_j)/(1+\epsilon) = d(\pijs, \pijs + \xi_j)/(1+\epsilon)^2$. Then this gives
	\begin{equation*}
		\frac{\ln(T)}{d(x_j,y_j)} = (1+\epsilon)^2\frac{\ln(T)}{d(\pijs, \pijs + \xi_j)}.
	\end{equation*}
	Using \cref{fact:diffKL}, if $\epsilon \in (0,1)$, $x_j \in (\pijs, \pijs + \xi_j)$, and $d(x_j, \pijs + \xi_j) = d(\pijs, \pijs + \xi_j)/(1+\epsilon)$ then
	\begin{equation*}
		x_j - \pijs \ge \frac{\epsilon}{1 + \epsilon}. \frac{d(\pijs, \pijs + \xi_j)}{\ln\left( \frac{(\pijs + \xi_j)(1-\pijs)}{\pijs(1-\pijs - \xi_j)}\right)}.
	\end{equation*}
	Using Pinsker's Inequality (\cref{fact:pinsker}), $1/d(x_j, \pijs) \le 1/2(x_j - \pijs)^2 = O({1}/{\epsilon^2})$ where big-Oh is hiding functions of the $\pijs$ and $\xi_j$,
	\begin{align*}
		\sum_{t=1}^T \Prob{j \succ_t \ist, j > i^\star} &\le (1+\epsilon)^2\frac{\ln(T)}{d(\pijs, \pijs + \xi_j)} + O\left(\frac{1}{\epsilon^2}\right)\\
		&\le (1+3\epsilon)\frac{\ln(T)}{d(\pijs, \pijs + \xi_j)} + O\left(\frac{1}{\epsilon^2}\right)\\
		&\le (1+\epsilon^\prime)\frac{\ln(T)}{d(\pijs, \pijs + \xi_j)} + O\left(\frac{1}{{\epsilon^\prime}^2}\right),
	\end{align*}
	where $\epsilon^\prime = 3\epsilon$ and the big-Oh above hides $\pijs$ and $\xi_j$ in addition to the absolute constants. Replacing $\epsilon$ by $\epsilon^\prime$ completes the proof.
\end{proof}

\depRegretBound*
\begin{proof}
	Let $M_j(T)$ is the number of times arm $j$ is selected by \ref{alg:TS_USS}. Than, the regret is
	\begin{align*}
		{\Regret_T} &= \sum_{j \in [K]}\EE{M_j(T)}\Delta_j = \sum_{j \in [K]}\EE{\sum_{t=1}^{T}\one{I_t = j}}\Delta_j \\ 
		&= \sum_{j \in [K]}\sum_{t=1}^{T}\EE{\one{I_t = j}}\Delta_j = \sum_{j \in [K]}\sum_{t=1}^{T}\Prob{I_t = j}\Delta_j \\
		&=  \sum_{j \in [K]}\sum_{t=1}^{T}\Prob{I_t=j, j \ne \ist}\Delta_j \\
		\implies {\Regret_T}&=  \sum_{j < i^\star} \sum_{t=1}^{T}\Prob{I_t=j, j < i^\star} \Delta_j + \sum_{j > i^\star}\sum_{t=1}^{T}\Prob{I_t=j, j > i^\star} \Delta_j \numberthis \label{eq:regretSum}
	\end{align*}
	First, we bound the first of term of summation. From \cref{lem:probLow}, we have
	\begin{equation*}
		\sum_{t=1}^{T}\Prob{I_t =j, j < i^\star} \le \frac{24}{\xi_j^2} + \sum_{s \ge 8/\xi_j} \Theta\left(\exp^{-{s\xi_j^2}/{2}} + \frac{\exp^{-{sd( \pijs - \xi_j,\pijs)}}}{(s+1)\xi_j^2} + \frac{1}{\exp^{{s\xi_j^2}/{4}} - 1} \right).
	\end{equation*}
	Using $\sum_{s\ge 0}\exp^{-sa} \le 1/a$, $d( \pijs - \xi_j,\pijs) \le 2\xi_j^2$ (\cref{fact:pinsker}), and \cref{fact:expUB}, we have
	\begin{align*}
		\sum_{t=1}^T \Prob{I_t =j,  j < i^\star} &\le \frac{24}{\xi_j^2} + \Theta\left(\frac{1}{\xi_j^2} + \frac{1}{\xi_j^4} + \left(\frac{1}{\xi_j^4} + \frac{1}{\xi_j^2} \right) \right) 
		\le O(1). \numberthis \label{equ:lowerRegret}
	\end{align*}
	If arm $I_t > \ist$ is selected then there exists at least one arm $k_1 > \ist$ which must be preferred over $\ist$. If the index of arm $k_1$ is smaller than the selected arm, then there must be an arm $k_2>k_1$, which must be preferred over $k_1$. By transitivity property, arm $k_2$ is also preferred over $\ist$. If the index of arm $k_2$ is still smaller of the selected arm, we can repeat the same argument. Eventually, we can find an arm $k^\prime$ whose index is larger than the selected arm, and it is preferred over arm $k_i, \ldots, k_1, \ist$. Note that the selected arm must be preferred over $k^\prime$; hence the selected arm is also preferred over $\ist$. We can write it as follows:
	\begin{align*}
		\sum_{t=1}^{T}\Prob{I_t=j, j > i^\star} \Delta_j =&~\sum_{t=1}^{T}\Prob{I_t=j, j > i^\star, k^\prime \succ_t k, k \succ_t \ist, k^\prime > j, k > \ist} \Delta_j\\
		=&~\sum_{t=1}^{T}\Prob{I_t=j, j > i^\star, k^\prime \succ_t \ist, k^\prime > j} \Delta_j \mbox{\hspace{2mm} (\cref{def:trans_prop})}\\
		=&~\sum_{t=1}^{T}\Prob{j \succ_t k, \forall k >j, j > i^\star, k^\prime \succ_t \ist, k^\prime > j} \Delta_j \mbox{\hspace{1mm}(\cref{lem:Bx})}\\
		=&~\sum_{t=1}^{T}\Prob{j \succ_t k, \forall k >j, j > i^\star, j \succ_t \ist} \Delta_j \mbox{\hspace{2mm} (\cref{def:trans_prop})}\\
		\implies \sum_{t=1}^{T}\Prob{I_t=j, j > i^\star} \Delta_j \le&~\sum_{t=1}^{T}\Prob{j \succ_t \ist, j > \ist} \Delta_j. \numberthis \label{equ:selectToPrefer}
	\end{align*}
	Using \cref{lem:probHigh} to upper bound $\sum_{t=1}^{T}$ $ \Prob{j \succ_t \ist, j > i^\star} \Delta_j$ and with \cref{equ:lowerRegret}, we get
	\begin{align*}
		{\Regret_T} &\le O (1)+ \sum_{j > i^\star}\left( (1+\epsilon)\frac{\ln(T)}{d(\pijs, \pijs + \xi_j)} + O\left(\frac{1}{\epsilon^2}\right) \right) \Delta_j\\
		\implies {\Regret_T} &\le  \sum_{j > i^\star}\ \frac{(1+\epsilon)\ln(T)}{d(\pijs, \pijs + \xi_j)}\Delta_j + O\left(\frac{K-i^\star}{\epsilon^2}\right). \qedhere
	\end{align*}
\end{proof}

\indepRegretBound*
\begin{proof}
	Let $M_j(T)$ is the number of times arm $j$ preferred over the optimal arm in $T$ rounds. From \cref{lem:probLow}, for any $j<i^\star$, we have
	\begin{align*}
		\EE{M_j(T)} &= \sum_{t=1}^{T}\Prob{I_t = j, j < i^\star} \\
		&\le \frac{24}{\xi_j^2} + \sum_{s \ge 8/\xi_j} \Theta\left(\exp^{-{s\xi_j^2}/{2}} + \frac{\exp^{-{sd( \pijs - \xi_j,\pijs)}}}{(s+1)\xi_j^2} + \frac{1}{\exp^{{s\xi_j^2}/{4}} - 1} \right).
	\end{align*}
	It is east to show that $\frac{\exp^{-{sd( \pijs - \xi_j,\pijs)}}}{(s+1)\xi_j^2} \le \frac{1}{(s+1)\xi_j^2}$ and ${\exp^{{s\xi_j^2}/{4}} - 1}  \ge {s\xi_j^2}/4 $ (as $\exp^y \ge y+1$), 
	\begin{align*}
		\EE{M_j(T)} \le \frac{24}{\xi_j^2} + \sum_{s \ge 8/\xi_j} \Theta\left(\frac{1}{\xi_j^2} + \frac{1}{(s+1)\xi_j^2} + \frac{4}{s\xi_j^2} \right).
	\end{align*}
	By using $\sum_{s\ge 0}\exp^{-sa} \le 1/a$ and $\sum_{s=1}^T (1/s) = \log T$, 
	\begin{align*}
		\EE{M_j(T)} &\le \frac{24}{\xi_j^2} + \Theta\left(\frac{1}{\xi_j^2} + \frac{\ln T}{\xi_j^2} \right) 
		\implies \EE{M_j(T)} \le O\left(\frac{\ln T}{\xi_j^2}\right). \numberthis \label{eq:lowIndRegret}
	\end{align*}

	\noindent
	For any $j>i^\star$, using \cref{lem:probHighPart2} and \cref{lem:probHighPart1} with \cref{equ:selectToPrefer}, we have
	\begin{equation*}
		  \EE{M_j(T)} = \sum_{t=1}^T \Prob{I_t = j, j > i^\star} \le \sum_{t=1}^T \Prob{j \succ_t \ist, j > i^\star} \le  \frac{\ln T}{d(x_j, y_j)} + 1 + \frac{1}{d(x_j, \pijs)}.
	\end{equation*}
	By setting $x_j = \pijs + \frac{\xi_j}{3}$ and $y_j = \pijs + \frac{2\xi_j}{3}$, we have $d(x_j, y_j) \ge \frac{2\xi_j^2}{9}$ and $d(x_j, \pijs) \ge \frac{2\xi_j^2}{9}$ (using \cref{fact:pinsker}).
	\begin{align*}	
		\EE{M_j(T)} &\le \frac{9\ln T}{2\xi_j^2} + 1 + \frac{9}{2\xi_j^2}\\
		\implies \EE{M_j(T)} &\le O\left(\frac{\ln T}{\xi_j^2}\right). \numberthis	\label{eq:highIndRegret}
	\end{align*}

	\noindent
	The regret of \ref{alg:TS_USS} is given by
	\begin{align*}
		{\Regret_T} = \sum\limits_{j\neq i^\star} \EE{M_j(T)}\Delta_j = \sum\limits_{j<i^\star}\EE{M_j(T)}\Delta_j + \sum\limits_{j>i^\star}\EE{M_j(T)}\Delta_j 
	\end{align*}
	Recall $\Delta_j = C_j + \gamma_j - (C_{i^\star} + \gamma_{i^\star})$ and for any two arms $i$ and $j$, $0 \le \pij - (\gamma_j - \gamma_{i^\star} ) \le \beta$. By using Eq. \eqref{def_xi_l} for $j<i^\star$, we have $\Delta_j = \xi_j - (\pijs - (\gamma_{i^\star} - \gamma_j)) \implies \Delta_j \le \xi_j$, and using Eq. \eqref{def_xi_h} for $j>i^\star$, we have $\Delta_j = \xi_j + (\pijs - (\gamma_{i^\star} - \gamma_j)) \implies \Delta_j \le \xi_j + \beta$. Replacing $\Delta_j$, 
	\begin{equation*}
		\Rightarrow {\Regret_T} \leq \sum\limits_{j<i^\star}\EE{M_j(T)}\xi_j + \sum\limits_{j>i^\star}\EE{M_j(T)}(\xi_j + \beta). 
	\end{equation*} 
	Let $0<\xi^\prime<1$. Then  ${\Regret_T}$ can be written as:
	\begin{align*}
		{\Regret_T} & \leq \sum\limits_{\substack{\xi^\prime > \xi_j\\ j < i^\star}} \EE{M_j(T)}\xi_j + \sum\limits_{\substack{\xi^\prime < \xi_j\\ j < i^\star}} \EE{M_j(T)}\xi_j \\
		&\qquad + \sum\limits_{\substack{\xi^\prime > \xi_j\\ j > i^\star}} \EE{M_j(T)}(\xi_j + \beta) + \sum\limits_{\substack{\xi^\prime < \xi_j\\ j > i^\star}} \EE{M_j(T)}(\xi_j + \beta).
	\end{align*}
	Using $\sum\limits_{\xi^\prime > \xi_j} \EE{M_j(T)} \le T$ for any $j$ such that $\xi^\prime > \xi_j$, 
	\begin{align*}
		{\Regret_T} &  \le T\xi^\prime + \sum\limits_{\substack{\xi^\prime < \xi_j\\ j < i^\star}} \EE{M_j(T)}\xi_j + \sum\limits_{\substack{\xi^\prime < \xi_j\\ j > i^\star}} \EE{M_j(T)}(\xi_j + \beta).
	\end{align*}
	Substituting the value of ${\Regret_T}$ from \cref{eq:lowIndRegret} and \cref{eq:highIndRegret},
	\begin{align*}
		{\Regret_T} &  \le T\xi^\prime + \sum\limits_{\substack{\xi^\prime < \xi_j\\ j < i^\star}} O\left(\frac{\xi_j\ln T }{\xi_j^2}\right) + \sum\limits_{\substack{\xi^\prime < \xi_j\\ j > i^\star}} O\left(\frac{(\xi_j + \beta)\ln T }{\xi_j^2}\right)\\
		&\le T\xi^\prime + \sum\limits_{\substack{\xi^\prime < \xi_j\\ j < i^\star}} O\left(\frac{\ln T }{\xi_j}\right) + \sum\limits_{\substack{\xi^\prime < \xi_j\\ j > i^\star}} O\left(\frac{\ln T}{\xi_j} + \frac{\beta\ln T}{\xi_j^2}\right)\\
		&  \le T\xi^\prime + O\left(\frac{K\ln T }{\xi^\prime}\right) + O\left(\frac{K\ln T }{\xi^\prime} + \frac{\beta K\ln T }{{\xi^\prime}^2}\right)\\
		&  = T\xi^\prime + O\left(K\ln T\left(\frac{1}{\xi^\prime} + \frac{\beta}{{\xi^\prime}^2}\right)\right)
		\intertext{Let there exist a variable $\alpha$ such that $O\left(K\ln T\left(\frac{1}{\xi^\prime} + \frac{\beta}{{\xi^\prime}^2}\right)\right) \le \alpha K\ln T\left(\frac{1}{\xi^\prime} + \frac{\beta}{{\xi^\prime}^2}\right)$,}
		\implies {\Regret_T} & \le = T\xi^\prime + \alpha K\ln T\left(\frac{1}{\xi^\prime} + \frac{\beta}{{\xi^\prime}^2}\right). \numberthis \label{eq:indRegret}
	\end{align*}		
	Consider $\PWD$ class of problems. As $\xi^\prime < 1$ and $\beta \le 2$ (as arms in the cascade may not be ordered by their error-rates, it is possible that $\gamma_{i} < \gamma_{j}$), we have $\left(\frac{1}{\xi^\prime} + \frac{\beta}{{\xi^\prime}^2}\right) \le \frac{\beta + 1}{{\xi^\prime}^2}\le \frac{3}{{\xi^\prime}^2}$,
	\begin{align*}
		{\Regret_T} & \le = T\xi^\prime +\frac{3\alpha K\ln T}{{\xi^\prime}^2}.
		\intertext{Choose $\xi^\prime = \left( \frac{6\alpha K\ln T}{T}\right)^{1/3}$ which maximize above upper bound and we get,}
		{\Regret_T} &\le \left(6\alpha K\ln T\right)^{1/3}T^{2/3} + \frac{\left(6\alpha K\ln T\right)^{1/3}}{2}T^{2/3} \\
		\implies {\Regret_T} &\le 2\left(6\alpha K\ln T\right)^{1/3}T^{2/3} = O\left( \left(K\ln T\right)^{1/3}T^{2/3}\right)
	\end{align*}		
	It completes our proof for the case when any problem instance belongs to $\PWD$.
	
	~\\
	\noindent
	Now we consider any problem instance $\theta \in \PSD$. For any $\theta \in \PSD \Rightarrow \forall j \in [K],\; \pij = \gamma_{i} - \gamma_j \implies \beta = 0$ (Setting $\Prob{\Yi = Y, \Yj \ne Y} = 0$ for $j>i$ in Proposition 3 of \cite{AISTATS17_hanawal2017unsupervised}). We can rewrite \cref{eq:indRegret} as
	\begin{align*}
		{\Regret_T} &\le T\xi^\prime +\frac{\alpha K\ln T}{{\xi^\prime}}.
		\intertext{Choose $\xi^\prime = \left( \frac{\alpha K\ln T}{T}\right)^{1/2}$ which maximize above upper bound and we get,}
		\implies {\Regret_T} &\le 2\left(\alpha KT\ln T\right)^{1/2} = O\left( \sqrt{KT\ln T}\right) 
	\end{align*}
	This complete proof for second part of Theorem \ref{thm:indepRegretBound}.
\end{proof}
	\fi

\end{document}